\theoremstyle{plain}
\newtheorem{theorem}{Theorem}
\newtheorem{lemma}[theorem]{Lemma}
\newtheorem{claim}[theorem]{Claim}
\theoremstyle{definition}
\theoremstyle{remark}
\newtheorem{remark}{Remark}
\newtheorem*{rep@theorem}{\rep@title}
\newcommand{\newreptheorem}[2]{%
\newenvironment{rep#1}[1]{%
 \def\rep@title{#2 \ref{##1}}%
 \begin{rep@theorem}}%
 {\end{rep@theorem}}}
\newtheorem*{rep@lemma}{\rep@title}
\newcommand{\newreplemma}[2]{%
\newenvironment{rep#1}[1]{%
 \def\rep@title{#2 \ref{##1}}%
 \begin{rep@lemma}}%
 {\end{rep@lemma}}}
\DeclareMathOperator{\Poly}{Poly}
\newcommand{\algo}[1]{\mathsf{Algo#1}}
\newcommand{\f}{f}
\newcommand{\E}{\mathbb{E}}
\newcommand{\ift}{\text{if }}
\title{Faster Monte-Carlo Algorithms for Fixation Probability of the Moran Process on Undirected Graphs}
\author{Krishnendu Chatterjee, Rasmus Ibsen-Jensen and Martin A. Nowak}
\begin{document}

\maketitle

\begin{abstract}
Evolutionary graph theory studies the evolutionary dynamics in a population 
structure given as a connected graph.
Each node of the graph represents an individual of the population, and edges
determine how offspring are placed.
We consider the classical birth-death Moran process where there are two types
of individuals, namely, the residents with fitness $1$ and mutants with
fitness $r$.
The fitness indicates the reproductive strength.
The evolutionary dynamics happens as follows: in the initial step, 
in a population of all resident individuals a mutant is introduced,
and then at each step, an individual is chosen proportional to the fitness
of its type to reproduce, and the offspring replaces a neighbor uniformly
at random.
The process stops when all individuals are either residents or mutants. 
The probability that all individuals in the end are mutants is called
the fixation probability, which is a key factor in the rate of evolution.
We consider the problem of approximating the fixation probability.

The class of algorithms that is extremely relevant for approximation of 
the fixation probabilities is the Monte-Carlo simulation of the process.
Previous results present a polynomial-time Monte-Carlo algorithm for undirected 
graphs when $r$ is given in unary.
First, we present a simple modification: instead of simulating each step, 
we discard {\em ineffective} steps, where no node changes type 
(i.e., either residents replace residents, or mutants replace mutants).
Using the above simple modification and our result that the number of 
effective steps is concentrated around the expected number of effective steps, 
we present faster polynomial-time Monte-Carlo algorithms for undirected graphs.
Our algorithms are always at least a factor $O(n^2/\log n)$ faster as 
compared to the previous algorithms, where $n$ is the number of nodes, 
and is polynomial even if $r$ is given in binary.
We also present lower bounds showing that the upper bound on the expected 
number of effective steps we present is asymptotically tight for undirected 
graphs.
\end{abstract}

\section{Introduction}\label{sec:intro}
In this work we present faster Monte-Carlo algorithms for approximation 
of the fixation probability of the fundamental Moran process on population 
structures with symmetric interactions.
We start with the description of the problem.

\subparagraph*{Evolutionary dynamics} 
Evolutionary dynamics act on populations, where the composition of the 
population changes over time due to mutation and selection. Mutation generates 
new types and selection changes the relative abundance of different types. 
A fundamental concept in evolutionary dynamics is the fixation probability of 
a new mutant~\cite{Ewens04,Karlin75,Moran62,NowakBook}: Consider a population 
of $n$ \emph{resident} individuals, each with a fitness 
value $1$. 
A single \emph{mutant} with non-negative fitness value~$r$ is introduced in 
the population as the initialization step. 
Intuitively, the fitness represents the reproductive strength. 
In the classical Moran process the following {\em birth-death} stochastic 
steps are repeated:
At each time step, one individual is chosen at random proportional to the fitness 
to reproduce and one other individual is chosen uniformly at random for death. 
The offspring of the reproduced individual replaces the dead individual. 
This stochastic process continues until either all individuals are mutants or 
all individuals are residents. 
The \emph{fixation probability} is the probability that the mutants take 
over the population, which means all individuals are mutants.
A standard calculation shows that the fixation probability is given by 
$(1-(1/r))/(1-(1/r^n))$.
The correlation between the relative fitness $r$ of the mutant 
and the fixation probability is a measure of the effect of natural selection.
The rate of evolution, which is the rate at which subsequent mutations 
accumulate in the population, 
is proportional to the fixation probability, the mutation rate, and 
the population size $n$. 
Hence fixation probability is a fundamental concept in evolution.

\subparagraph*{Evolutionary graph theory}
While the basic Moran process happens in well-mixed population (all
individuals interact uniformly with all others), a fundamental 
extension is to study the process on population structures.
Evolutionary graph theory studies this phenomenon. 
The individuals of the population occupy the nodes of a connected graph. 
The links (edges) determine who interacts with whom. 
Basically, in the birth-death step, for the death for replacement, 
a neighbor of the reproducing individual is chosen uniformly at random.
Evolutionary graph theory describes evolutionary dynamics in spatially 
structured population where most interactions and competitions occur 
mainly among neighbors in physical space~\cite{Nowak05,Hauert14,Frean07,Shakarian12}. 
Undirected graphs represent population structures where the interactions
are symmetric, whereas directed graphs allow for asymmetric interactions.
The fixation probability depends on the population structure~\cite{Nowak05,ACN15,ICALP16,Daz13}. 
Thus, the fundamental computational problem in evolutionary graph theory 
is as follows: 
given a population structure (i.e., a graph), the relative fitness $r$,
and $\epsilon>0$, compute an $\epsilon$-approximation of the fixation 
probability.

\subparagraph*{Monte-Carlo algorithms}
A particularly important class of algorithms for biologists is the 
Monte-Carlo algorithms, because it is simple and easy to interpret. 
The Monte-Carlo algorithm for the Moran process basically requires to 
simulate the process, and from the statistics obtain an approximation of 
the fixation probability.
Hence, the basic question we address in this work is simple Monte-Carlo
algorithms for approximating the fixation probability. 
It was shown in \cite{Diaz16} that simple simulation can take exponential time on directed graphs and thus we focus on undirected graphs.
The main previous algorithmic result in this area~\cite{Diaz14} presents 
a polynomial-time Monte-Carlo algorithm for undirected graphs when $r$ is 
given in unary.
The main result of~\cite{Diaz14} shows that for undirected graphs it suffices to 
run each simulation for polynomially many steps.

\subparagraph*{Our contributions} 
In this work our main contributions are as follows:

\begin{enumerate}
\item {\em Faster algorithm for undirected graphs}
First, we present a simple modification: instead of simulating each step, 
we discard {\em ineffective} steps, where no node changes type 
(i.e., either residents replace residents, or mutants replace mutants).
We then show that the number of effective steps is concentrated around 
the expected number of effective steps.
The sampling of each effective step is more complicated though than sampling of
each step.
We then present an efficient algorithm for sampling of the effective steps,
which requires $O(m)$ preprocessing and then $O(\Delta)$ time for sampling,
where $m$ is the number of edges and $\Delta$ is the maximum degree.
Combining all our results we obtain faster polynomial-time Monte-Carlo 
algorithms:
Our algorithms are always at least a factor $n^2/\log n$ times a constant
(in most cases $n^3/\log n$ times a constant) faster as compared 
to the previous algorithm, and is polynomial even if $r$ is given in binary.
We present a comparison in Table~\ref{tab:intro}, for constant $r>1$ 
(since the previous algorithm is not in polynomial time for $r$ in binary).
For a detailed comparison see Table~\ref{tab:appendix} in the Appendix.
 
\begin{table}
\center
\begin{tabular}{| l l| c| c| }
\hline
   & & All steps & Effective steps \\\hline
    \#steps in expectation& & $O(n^2\Delta^2)$ & $O(n\Delta)$ \\\hline
    Concentration bounds& & $\Pr[\tau \geq \frac{n^2\Delta^2rx}{r-1}]\leq 1/x$ & $\Pr[\tau \geq \frac{6n\Delta x}{\min(r-1,1)}]\leq 2^{-x}$ \\
    \hline
\multicolumn{2}{|l|}{Sampling a step}  & $O(1)$ & $O(\Delta)$ \\\hline
\multicolumn{2}{|l|}{Fixation algo}  & $O(n^6 \Delta^2\epsilon^{-4})$ & $O(n^2 \Delta^2\epsilon^{-2}(\log n+\log\epsilon^{-1}))$ \\\hline
\end{tabular}
\caption{Comparison with previous work, for constant $r>1$. 
We denote by $n$, $\Delta$, $\tau$, and $\epsilon$, the number of nodes, the maximum degree, the random variable for the fixation time, and
the approximation factor, respectively.
The results in the column ``All steps'' is from \cite{Diaz14}, except that we present the dependency on $\Delta$, which was considered as $n$ in~\cite{Diaz14}. 
The results of the column ``Effective steps'' is the results of this paper\label{tab:intro}}
\end{table}

\item {\em Lower bounds}
We also present lower bounds showing that the upper bound on the expected 
number of effective steps we present is asymptotically tight for undirected graphs.

\end{enumerate}

\subparagraph*{Related complexity result}
While in this work we consider evolutionary graph theory, a related problem 
is evolutionary games on graphs (which studies the problem of frequency 
dependent selection). 
The approximation problem for evolutionary games on graphs is considerably
harder (e.g., PSPACE-completeness results have been established)~\cite{ICN15}.

\subparagraph*{Technical contributions}
Note that for the problem we consider the goal is not to design complicated
efficient algorithms, but simple algorithms that are efficient. 
By simple, we mean something that is related to the process itself,
as biologists understand and interpret the Moran process well.
Our main technical contribution is a simple idea to discard ineffective steps,
which is intuitive, and we show that the simple modification leads to 
significantly faster algorithms.
We show a gain of factor $O(n\Delta)$ due to the effective steps, then lose
a factor of $O(\Delta)$ due to sampling, and our other improvements are due to 
better concentration results.
We also present an interesting family of graphs for the lower bound examples.
Technical proofs omitted due to lack of space are in the Appendix.

\section{Moran process on graphs}\label{sec:def}

\subparagraph*{Connected graph and type function}
We consider the population structure represented as a connected graph.
There is a connected {\em graph} $G=(V,E)$, of $n$ nodes and $m$ edges, 
and two types $T=\{t_1,t_2\}$. 
The two types represent residents and mutants, and in the technical exposition 
we refer to them as $t_1$ and $t_2$ for elegant notation.
We say that a node $v$ is a {\em successor} of a node $u$ if $(u,v)\in E$.
The graph is {\em undirected} if for all $(u,v) \in E$ we also have $(v,u) \in E$, 
otherwise it is {\em directed}.
There is a {\em type function} $\f$ mapping each node $v$ to a type $t\in T$. 
Each type $t$ is in turn associated with a positive integer $w(t)$, the type's fitness
denoting the corresponding reproductive strength. 
Without loss of generality, we will assume that $r=w(t_1)\geq w(t_2)=1$, for some number 
$r$ 
(
the process we consider does not change under scaling, and $r$ denotes relative fitness). 
Let $W(\f)=\sum_{u\in V} w(\f(u))$ be the total fitness. 
For a node $v$ let $\deg v$ be the degree of $v$ in $G$. 
Also, let $\Delta=\max_{v\in V} \deg v$ be the maximum degree of a node. 
For a type $t$ and type function $\f$, let $V_{t,\f}$ be the nodes mapped to $t$ by $\f$.
Given a type $t$ and a node $v$, let $\f[v\rightarrow t]$ denote the following 
function: 
$f[v\rightarrow t](u)= t$ $\ift u=v$ and $\f(u)$ otherwise.

\subparagraph*{Moran process on graphs}
We consider the following classical Moran birth-death process where a 
{\em dynamic evolution step} of the process changes a type function from $\f$ to $\f'$
as follows: 
\begin{enumerate}

\item First a node $v$ is picked at random with probability proportional to $w(\f(v))$, i.e. each node $v$ has probability of being picked equal to $\frac{w(\f(v))}{W(\f)}$. 
\item Next, a successor $u$ of $v$ is picked uniformly at random.
\item The type of $u$ is then changed to $\f(v)$. In other words, 
$\f'=\f[u\rightarrow \f(v)]$.

\end{enumerate}

\subparagraph*{Fixation} 
A type $t$ {\em fixates} in a type function $\f$ if $\f$ maps all nodes to $t$.
Given a type function $\f$, repeated applications of the dynamic evolution step 
generate a sequence of type functions $\f=\f_1,\f_2,\dots,\f_\infty$. 
Note that if a type has fixated (for some type $t$) in $\f_i$ then it has also fixated in $\f_j$ for $i<j$. We say that a process has {\em fixation time} $i$ if $\f_i$ has fixated but $\f_{i-1}$ has not. We say that an initial type function $\f$ has fixation probability $p$ for a type $t$, if the probability that $t$ eventually fixates (over the probability measure on sequences generated by repeated applications of the dynamic evolution step $\f$)

\subparagraph*{Basic questions}
We consider the following basic questions:
\begin{enumerate}

\item {\em Fixation problem} Given a type $t$, what is the fixation probability of $t$ averaged over the $n$ initial type functions with a single node mapping to $t$?
\item {\em Extinction problem} Given a type $t$, what is the fixation probability of $t$ averaged over the $n$ initial type functions with a single node {\em not} mapping to $t$?
\item {\em Generalized fixation problem} Given a graph, a type $t$ and an type function $\f$ what is the fixation probability of $t$ in $G$, when the initial type function is $\f$? 
\end{enumerate}

\begin{remark}\label{rem:neutral}
Note that in the {\em neutral} case when $r=1$, the fixation problem has answer $1/n$ and 
extinction problem has answer $1-1/n$. Hence, in the rest of the paper we will consider
$r >1$.
Also, to keep the presentation focused, in the main article, we will consider fixation 
and extinction of type $t_1$.
In the Appendix we also present another algorithm for the extinction of $t_2$. 
\end{remark}

\subparagraph*{Results}
We will focus on undirected graphs.
For undirected graphs, we will give new FPRAS (fully polynomial, randomized 
approximation scheme) for the fixation and the extinction problem, 
and a polynomial-time algorithm for an additive approximation of the 
generalized fixation problem. 
There exists previous FPRAS for the fixation and extinction problems~\cite{Diaz14}. 
Our upper bounds are at least a factor of $O(\frac{n^2}{\log n})$ (most cases $O(\frac{n^3}{\log n})$)
better and always in $\Poly(n,1/\epsilon)$, whereas the previous algorithms are not in polynomial 
time for $r$ given in binary.

\section{Discarding ineffective steps}\label{sec:ineffective}
We consider undirected graphs. 
Previous work by Diaz et al.~\cite{Diaz14} showed that the expected number of dynamic evolution 
steps till fixation is polynomial, and then used it to give a polynomial-time 
Monte-Carlo algorithm. 
Our goal is to improve the quite high polynomial-time complexity, 
while giving a Monte-Carlo algorithm. 
To achieve this we define the notion of effective steps.

\subparagraph*{Effective steps}
A dynamic evolution step, which changes the type function from $\f$ to $\f'$, 
is {\em effective} if $\f\neq \f'$ (and {\em ineffective} otherwise).
The idea is that steps in which no node changes type (because the two nodes 
selected in the dynamic evolution step already had the same type) 
can be discarded, without changing which type fixates/gets eliminated. 

\subparagraph*{Two challenges} The two challenges are as follows:
\begin{enumerate}
\item {\em Number of steps} 
The first challenge is to establish that the expected number of effective
steps is asymptotically smaller than the expected number of all steps.
We will establish a factor $O(n\Delta)$ improvement (recall $\Delta$ is the 
maximum degree).
\item {\em Sampling} 
Sampling an effective step is harder than sampling a normal step. 
Thus it is not clear that considering effective steps leads to a faster algorithm.
We consider the problem of efficiently sampling an effective step in a later section, see Section~\ref{sec:sample}. We show that sampling an effective step can be done in $O(\Delta)$ time (after $O(m)$ preprocessing).
\end{enumerate}

\subparagraph*{Notation} 
For a type function $\f$, let $\Gamma_v(\f)$ be the subset of successors of $v$, such that $u\in \Gamma_v(\f)$ iff  
$\f(v)\neq \f(u)$. Also, let $W'(\f)=\sum_u w(\f(u))\cdot \frac{|\Gamma_u(\f)|}{\deg u}$.

\subparagraph*{Modified dynamic evolution step}
Formally, we consider the following {\em modified dynamic evolution step} (that changes the type function from $\f$ to $\f'$ and assumes that $f$ does not map all nodes to the same type): 
\begin{enumerate}
\item First a node $v$ is picked at random  with probability proportional to $p(v)=w(\f(v))\cdot \frac{|\Gamma_v(\f)|}{\deg v}$ i.e. each node $v$ has probability of being picked equal to $\frac{p(v)}{W'(\f)}$.
\item Next, a successor $u$ of $v$ is picked uniformly at random among $\Gamma_v(\f)$.
\item The type of $u$ is then changed to $\f(v)$, i.e., $\f'=\f[u\rightarrow \f(v)]$.
\end{enumerate}

In the following lemma we show that the modified dynamic evolution step corresponds to the dynamic evolution 
step except for discarding steps in which no change was made.

\begin{lemma}\label{lem:preserve}
Fix any type function $\f$ such that neither type has fixated. 
Let $\f_d$ (resp., $\f_m$) be the next type function under dynamic evolution step (resp., modified dynamic evolution step). 
Then, $\Pr[\f\neq \f_d]> 0$ and for all type functions $\f'$ we have: $\Pr[\f'=\f_d\mid \f\neq \f_d]=\Pr[\f'=\f_m]$.
\end{lemma}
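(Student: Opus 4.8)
The plan is to show that the modified step reproduces the law of the ordinary step conditioned on being effective, by comparing the two probability mass functions outcome by outcome. First I would dispose of the positivity claim $\Pr[\f\neq\f_d]>0$: since neither type has fixated and $G$ is connected, there is an edge $(v,u)$ with $\f(v)\neq\f(u)$; picking $v$ (probability $w(\f(v))/W(\f)>0$) and then $u$ among the successors of $v$ (probability $1/\deg v>0$) yields an effective step, so this event has positive probability.

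Next I would compute $\Pr[\f\neq\f_d]$ exactly. An ordinary step is effective precisely when the chosen successor $u$ lies in $\Gamma_v(\f)$, so summing over the source node $v$ gives
\[
\Pr[\f\neq\f_d]=\sum_v \frac{w(\f(v))}{W(\f)}\cdot\frac{|\Gamma_v(\f)|}{\deg v}=\frac{W'(\f)}{W(\f)}.
\]

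The heart of the argument is to match the per-outcome probabilities. Fix an effective target $\f'$; it must differ from $\f$ at exactly one node $u$, with $\f'(u)=t$ for some $t\neq\f(u)$. Under either dynamics this outcome arises exactly by choosing some neighbor $v$ of $u$ with $\f(v)=t$ and then selecting $u$. For the ordinary step the joint probability of $(v,u)$ is $\frac{w(\f(v))}{W(\f)}\cdot\frac{1}{\deg v}$, so $\Pr[\f'=\f_d]=\frac{1}{W(\f)}\sum_v\frac{w(t)}{\deg v}$, where $v$ ranges over neighbors of $u$ with $\f(v)=t$. For the modified step the successor is chosen uniformly from $\Gamma_v(\f)$, so the contribution of $v$ carries the factor $\frac{p(v)}{|\Gamma_v(\f)|}$; the key cancellation is $\frac{p(v)}{|\Gamma_v(\f)|}=\frac{w(\f(v))}{\deg v}=\frac{w(t)}{\deg v}$, whence $\Pr[\f'=\f_m]=\frac{1}{W'(\f)}\sum_v\frac{w(t)}{\deg v}$ over the same index set. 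Comparing the two yields $\Pr[\f'=\f_d]=\frac{W'(\f)}{W(\f)}\Pr[\f'=\f_m]$.

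Dividing by $\Pr[\f\neq\f_d]=W'(\f)/W(\f)$ then gives $\Pr[\f'=\f_d\mid\f\neq\f_d]=\Pr[\f'=\f_m]$ for every effective $\f'$. Finally I would check the degenerate cases: if $\f'=\f$ both sides vanish (the modified step never leaves $\f$ unchanged, and $\f_d$ is conditioned on being effective), and if $\f'$ differs from $\f$ at two or more nodes both sides are $0$ since a single step changes at most one node. I expect the only real care needed is the bookkeeping of the sum over the several source nodes $v$ that can produce the same $\f'$, together with spotting the cancellation of $|\Gamma_v(\f)|$ that makes the two normalized sums proportional with the single global ratio $W'(\f)/W(\f)$.
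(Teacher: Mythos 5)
Your proof is correct and follows essentially the same route as the paper's: both hinge on the cancellation $p(v)/|\Gamma_v(\f)| = w(\f(v))/\deg v$ and on the identity $\Pr[\f\neq\f_d]=W'(\f)/W(\f)$, so that the two per-outcome probabilities are proportional with exactly that ratio. The only cosmetic difference is that the paper matches the distributions at the level of the selected edge $(v,u)$, whereas you aggregate over the possible source nodes $v$ to match the distributions of the resulting type function $\f'$ directly; the computation is the same.
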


\subparagraph*{Potential function $\psi$}
Similar to~\cite{Diaz14} we consider the {\em potential function} $\psi=\sum_{v\in V_{t_1,\f}} \frac{1}{\deg v}$
(recall that $V_{t_1,\f}$ is the set of nodes of type $t_1$). 
We now lower bound the expected difference in potential per modified
evolutionary step.

\begin{lemma}\label{lem:fast1step}
Let $\f$ be a type function such that neither type has fixated.
Apply a modified dynamic evolution step on $\f$ to obtain $\f'$. 
Then, \[
\E[\psi(\f')-\psi(\f)]\geq \frac{r-1}{\Delta\cdot (r+1)} \enspace .
\]
\end{lemma}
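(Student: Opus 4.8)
The plan is to compute the expected potential change exactly and then reduce it to a per-edge estimate. Observe first that a modified dynamic evolution step flips exactly one node: if the reproducing node $v$ has type $t_1$ then its chosen successor $u\in\Gamma_v(\f)$ had type $t_2$ and becomes $t_1$, so $\psi$ increases by $1/\deg u$; symmetrically, if $v$ has type $t_2$ then $\psi$ decreases by $1/\deg u$. The crucial simplification is that the probability of selecting $v$ and then $u$ telescopes: it equals $\frac{p(v)}{W'(\f)}\cdot\frac{1}{|\Gamma_v(\f)|}=\frac{w(\f(v))}{\deg v\cdot W'(\f)}$, the $|\Gamma_v(\f)|$ factors cancelling. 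Hence
\[
\E[\psi(\f')-\psi(\f)]=\frac{1}{W'(\f)}\sum_{v}\frac{w(\f(v))}{\deg v}\sum_{u\in\Gamma_v(\f)}\sigma(v)\cdot\frac{1}{\deg u},
\]
where $\sigma(v)=+1$ if $\f(v)=t_1$ and $\sigma(v)=-1$ if $\f(v)=t_2$.

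Next I would regroup the double sum over the undirected boundary edges, i.e.\ edges $\{a,b\}$ with $\f(a)\neq\f(b)$; say $\f(a)=t_1$ and $\f(b)=t_2$. Each such edge appears twice: as $(v,u)=(a,b)$ contributing $+\frac{r}{\deg a\,\deg b}$, and as $(v,u)=(b,a)$ contributing $-\frac{1}{\deg a\,\deg b}$, for a net $\frac{r-1}{\deg a\,\deg b}$. The same regrouping applied to $W'(\f)=\sum_u w(\f(u))\frac{|\Gamma_u(\f)|}{\deg u}$ yields a contribution $\frac{r}{\deg a}+\frac{1}{\deg b}$ per boundary edge. Writing $\partial$ for the boundary edge set, this gives the clean identity
\[
\E[\psi(\f')-\psi(\f)]=(r-1)\cdot\frac{\sum_{\{a,b\}\in\partial}\frac{1}{\deg a\,\deg b}}{\sum_{\{a,b\}\in\partial}\left(\frac{r}{\deg a}+\frac{1}{\deg b}\right)}.
\]
Because neither type has fixated and $G$ is connected, $\partial$ is nonempty, so the denominator is strictly positive.

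Finally I would finish with a mediant-type argument rather than bounding numerator and denominator separately. For a single boundary edge the ratio of its numerator term to its denominator term is $\frac{1/(\deg a\,\deg b)}{r/\deg a+1/\deg b}=\frac{1}{r\deg b+\deg a}\geq\frac{1}{\Delta(r+1)}$, using $\deg a,\deg b\leq\Delta$. Since every summand of the numerator is at least $\frac{1}{\Delta(r+1)}$ times the corresponding summand of the denominator, the same bound holds for the ratio of the two sums, and multiplying by $r-1$ yields the claim. The main thing to get right is the bookkeeping in the regrouping step — correctly accounting for each boundary edge's two orientations and the opposite signs they carry — together with the observation that the per-edge ratio must be applied edgewise (via $\frac{\sum x_i}{\sum y_i}\geq\min_i\frac{x_i}{y_i}$) and not by crudely bounding the two sums on their own, which would lose the needed factor.
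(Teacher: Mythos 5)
Your proof is correct and follows essentially the same route as the paper: compute the exact expected change, pair the two orientations of each boundary edge to obtain a net contribution of $\frac{r-1}{\deg a\,\deg b}$, and express $W'(\f)$ as a sum over the same boundary edges. The only cosmetic difference is the final step --- you apply the mediant inequality $\frac{\sum x_i}{\sum y_i}\geq\min_i\frac{x_i}{y_i}$ edge-by-edge, while the paper bounds the numerator and $W'(\f)$ separately but through the common quantity $S=\sum_{(v,u)\in E_{12}}\frac{1}{\min(\deg u,\deg v)}$ --- and both give the identical constant $\frac{r-1}{\Delta(r+1)}$.
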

\begin{proof}
Observe that $\f$ differs from $\f'$ for exactly one node $u$. More precisely, let $v$ be the node picked in line~1 of the modified dynamic evolution step and let $u$ be the node picked in line~2.
Then, $\f'=\f[u\rightarrow \f(v)]$. The probability to select $v$ is $\frac{p(v)}{W'(\f)}$. The probability to then pick $u$ is $\frac{1}{|\Gamma_v(\f)|}$.

We have that
\begin{itemize}
\item If $\f(u)=t_2$ (and thus, since it got picked $\f(v)=t_1$), then $\psi(\f')-\psi(\f)=\frac{1}{\deg u}$.
\item If $\f(u)=t_1$ (and thus, since it got picked $\f(v)=t_2$), then $\psi(\f')-\psi(\f)=-\frac{1}{\deg u}$.
\end{itemize}
Below we use the following notations:
\[
E_{12}=\{(v,u)\in E\mid \f(v)=t_1 \text{ and } \f(u)=t_2\}; \quad
E_{21}=\{(v,u)\in E\mid \f(v)=t_2 \text{ and } \f(u)=t_1\}.
\]
Thus, 
\begin{align*}
\E[\psi(\f')-\psi(\f)]&= 
\sum_{(v,u)\in E_{12} }\left(\frac{p(v)}{W'(\f)}\cdot \frac{1}{|\Gamma_v(\f)|}\cdot \frac{1}{\deg u}\right)-
\sum_{(v,u)\in E_{21}}\left(\frac{p(v)}{W'(\f)}\cdot \frac{1}{|\Gamma_v(\f)|}\cdot \frac{1}{\deg u}\right) \\
&= \sum_{(v,u)\in E_{12}}\left(\frac{w(\f(v))}{W'(\f)\cdot (\deg u)\cdot (\deg v)}\right)-
\sum_{(v,u)\in E_{21}}\left(\frac{w(\f(v))}{W'(\f)\cdot (\deg u)\cdot (\deg v)}\right) \enspace .
\end{align*}

Using that the graph is undirected we get,
\begin{align*}
\E[\psi(\f)-\psi(\f')]&= \sum_{(v,u)\in E_{12}}\left(\frac{w(\f(v))-w(\f(u))}{W'(\f)\cdot (\deg u)\cdot (\deg v)}\right)\\
&= \frac{1}{W'(\f)}\sum_{(v,u)\in E_{12}}\left(\frac{r-1}{\min(\deg u,\deg v)\cdot \max(\deg u,\deg v)}\right)\\
&\geq 
\frac{r-1}{\Delta\cdot W'(\f)}\sum_{(v,u)\in E_{12}}\frac{1}{\min(\deg u,\deg v)}
=\frac{r-1}{\Delta\cdot W'(\f)}\cdot S
\enspace ,
\end{align*}
where $S=\sum_{(v,u)\in E_{12}}\frac{1}{\min(\deg u,\deg v)}$. 
Note that in the second equality we use that for two numbers $a,b$, their product
is equal to $\min(a,b) \cdot \max(a,b)$.
By definition of $W'(\f)$, we have \begin{align*}
W'(\f)&=\sum_u w(\f(u))\cdot \frac{|\Gamma_u(\f)|}{\deg u}=\sum_u \sum_{v\in \Gamma_u(\f)}\frac{w(\f(u))}{\deg u}=
\sum_{\substack{(v,u)\in E\\\f(u)\neq f(v)}} \frac{w(\f(u))}{\deg u}\\
&=\sum_{(v,u)\in E_{12}} \left(\frac{w(\f(u))}{\deg u}+\frac{w(\f(v))}{\deg v}\right)\leq 
\sum_{(v,u)\in E_{12}} \frac{w(\f(u))+w(\f(v))}{\min(\deg u,\deg v)} = (r+1)\cdot S\enspace .
\end{align*}
Thus, we see that 
$\E[\psi(\f')-\psi(\f)]\geq \frac{r-1}{\Delta\cdot (r+1)}$,
as desired. This completes the proof.
\end{proof}

\begin{lemma}\label{lem:large_r}
Let $r=x\Delta$ for some number $x>0$.
Let $\f$ be a type function such that neither type has fixated.
Apply a modified dynamic evolution step on $\f$ to obtain $\f'$. The probability that 
$|V_{t_1,\f'}|=|V_{t_1,\f}|+1$ is at least $\frac{x}{x+1}$ 
(otherwise, $|V_{t_1,\f'}|=|V_{t_1,\f}|-1$). 
\end{lemma}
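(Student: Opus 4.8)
The plan is to reduce the statement to a single comparison between the total selection weights of the two types. First I would observe that a modified dynamic evolution step is always \emph{effective}: the node $u$ chosen in line~2 lies in $\Gamma_v(\f)$, so $\f(u)\neq\f(v)$, and after the step $u$ takes the type $\f(v)$. Hence exactly one node changes type, and $|V_{t_1,\f'}|=|V_{t_1,\f}|+1$ holds precisely when the reproducing node $v$ selected in line~1 has type $t_1$; otherwise the count drops by one (which also accounts for the ``otherwise'' clause). So it suffices to show that the probability of selecting a $t_1$-node in line~1 is at least $\tfrac{x}{x+1}$.

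Next I would write this probability explicitly. Let $A=\sum_{v\in V_{t_1,\f}} p(v)$ and $B=\sum_{v\in V_{t_2,\f}} p(v)$ be the total selection weights of the two types, so that $W'(\f)=A+B$ and the probability of picking a $t_1$-node equals $\tfrac{A}{A+B}$. Since $\tfrac{A}{A+B}\geq\tfrac{x}{x+1}$ is equivalent (all quantities positive) to $A\geq xB$, the whole lemma reduces to establishing the single inequality $A\geq xB$ with $x=r/\Delta$.

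Then I would rewrite $A$ and $B$ as sums over boundary edges, exactly as in the computation of $W'(\f)$ in the proof of Lemma~\ref{lem:fast1step}. Writing $E_{12}=\{(v,u)\in E\mid \f(v)=t_1,\ \f(u)=t_2\}$ and using $w(t_1)=r$, $w(t_2)=1$, one gets $A=r\sum_{(v,u)\in E_{12}}\tfrac{1}{\deg v}$ (the sum ranging over the $t_1$-endpoints) and, using that the graph is undirected, $B=\sum_{(v,u)\in E_{12}}\tfrac{1}{\deg u}$ (over the matching $t_2$-endpoints). Now I would compare the two sums term by term: for each boundary edge $(v,u)\in E_{12}$ we have $\tfrac{r}{\deg v}\geq\tfrac{r}{\Delta}=x$ because $\deg v\leq\Delta$, while $\tfrac{1}{\deg u}\leq 1$ because $\deg u\geq 1$; hence $\tfrac{r}{\deg v}\geq x\cdot\tfrac{1}{\deg u}$. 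Summing this per-edge inequality over $E_{12}$ yields $A\geq xB$, as required.

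The derivation is mostly bookkeeping, and the only point requiring care is keeping track of which endpoint's degree appears in $A$ versus $B$: $A$ is normalized by the degree of the $t_1$-endpoint and $B$ by that of the $t_2$-endpoint, a distinction the undirected structure makes transparent via the correspondence between $E_{12}$ and $E_{21}$. The genuine content is the observation that the per-edge comparison needs only the two one-sided bounds $\deg v\leq\Delta$ and $\deg u\geq 1$; these are precisely what make the $t_1$-weight dominate by the factor $x=r/\Delta$, so I expect no real obstacle beyond setting up the edge-indexed sums correctly.
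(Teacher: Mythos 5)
Your proposal is correct and follows essentially the same route as the paper: both reduce the claim to showing that the total modified selection weight of the $t_1$-nodes is at least $x$ times that of the $t_2$-nodes, using exactly the bounds $\deg v\leq\Delta$ (with $w(t_1)=x\Delta$) on the $t_1$ side and $\deg u\geq 1$ (with $w(t_2)=1$) on the $t_2$ side. The only cosmetic difference is that the paper compares both weights to the number $m'$ of boundary edges ($W_1\geq xm'$, $W_2\leq m'$) while you compare the two edge-indexed sums term by term, which is an equivalent bookkeeping choice.
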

\begin{proof}
Consider any type function $\f$. Let $m'$ be the number of edges $(u,v)$, such that $\f(u)\neq \f(v)$. We will argue that the total weight of nodes of type $t_1$, denoted $W_1$, is at least $xm'$ and that the total weight of nodes of type $t_2$, denoted $W_2$, is at most $m'$. We see that as follows: 
\[
W_1=\sum_{v\in V_{t_1,\f}}w(\f(v))\cdot \frac{|\Gamma_v(\f)|}{\deg v}\geq \frac{x\Delta}{\Delta}\sum_{v\in V_{t_1,\f}}|\Gamma_v(\f)|=xm'\enspace ,
\]
using that $\deg v\leq \Delta$ and $w(\f(v))=r=x\Delta$ in the inequality.
Also,
\[
W_2=\sum_{v\in V_{t_2,\f}}w(\f(v))\cdot \frac{|\Gamma_v(\f)|}{\deg v}\leq \sum_{v\in V_{t_2,\f}}|\Gamma_v(\f)|=m'\enspace ,
\]
using that $\deg v\geq 1$ and $w(\f(v))=1$ in the inequality.
We see that we thus have a probability of at least $\frac{x}{x+1}$ to pick a node of type $t_1$. Because we are using effective steps, picking a member of type $t$ will increment the number of that type (and decrement the number of the other type).
\end{proof}

\begin{lemma}\label{lemm:conc}
Consider an upper bound $\ell$, for each starting type function, on the expected number of (effective) 
steps to fixation.
Then for any starting type function the probability that fixation requires more than 
$2\cdot \ell\cdot x$ (effective) steps is at most $2^{-x}$.
\end{lemma}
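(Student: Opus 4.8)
The plan is to boost the trivial Markov-inequality bound into an exponential tail by exploiting the fact that $\ell$ is a \emph{uniform} bound, holding simultaneously for every starting type function. First I would observe that for any fixed starting type function $\f$, if $\tau$ denotes the (random) number of effective steps to fixation, then $\E[\tau]\leq \ell$ by hypothesis, so Markov's inequality gives
\[
\Pr[\tau \geq 2\ell] \leq \frac{\E[\tau]}{2\ell} \leq \frac{1}{2}.
\]
This is the base case: from \emph{any} configuration, with probability at least $1/2$ fixation occurs within the next $2\ell$ effective steps.

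Next I would partition the timeline into consecutive blocks of $2\ell$ effective steps each and argue by a restart/Markov-property argument. The modified dynamic evolution process is Markovian: the distribution of the remaining trajectory depends only on the current type function, not on the history. If the process has not fixated after $i$ blocks (i.e.\ after $2\ell i$ effective steps), it sits at some type function $\f_i$ in which neither type has fixated, and since $\ell$ bounds the expected remaining fixation time from \emph{every} such configuration, the base-case estimate applies afresh: conditioned on survival through the first $i$ blocks, the probability of surviving the $(i{+}1)$-st block is again at most $1/2$. Chaining these conditional bounds over $\lceil x\rceil$ blocks yields
\[
\Pr[\tau > 2\ell x] \leq \Pr[\text{survive } \lfloor x\rfloor \text{ blocks}] \leq \left(\tfrac{1}{2}\right)^{\lfloor x\rfloor} \leq 2^{-x}
\]
for the integer case, with the non-integer case handled by monotonicity (surviving past $2\ell x$ steps requires in particular surviving the first $\lfloor x\rfloor$ full blocks).

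The one point requiring genuine care — and the step I expect to be the main obstacle to state cleanly — is the conditioning in the restart argument: one must verify that the inequality $\E[\text{remaining time}\mid \f_i]\leq \ell$ is legitimate for the conditional (surviving) process. This is exactly where the uniformity of the hypothesis is essential; because $\ell$ upper-bounds the expected fixation time from \emph{all} non-fixated starting configurations, the bound transfers to $\f_i$ regardless of which configuration the conditioning has led to, and the Markov property ensures no dependence on how $\f_i$ was reached. Making this precise via the strong Markov property (stopping at the end of each block, or at the fixation time, whichever is first) is the only subtlety; the remaining arithmetic is the routine geometric product above.
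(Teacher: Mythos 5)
Your proposal is correct and follows essentially the same route as the paper's proof: Markov's inequality gives probability at least $\frac{1}{2}$ of fixation within $2\ell$ effective steps from any configuration, and the timeline is then split into blocks of length $2\ell$ with the bound chained across blocks via the Markov property. The only difference is that you make explicit the conditioning/restart subtlety that the paper treats implicitly.
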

\begin{proof}
By Markov's inequality after $2\cdot \ell$  (effective) steps 
the Moran process fixates with probability at least $\frac{1}{2}$, irrespective of the initial type function. 
We now split the steps into blocks of length $2\cdot \ell$. 
In every block, by the preceding argument, there is a probability of at least $\frac{1}{2}$ to fixate in 
some step of that block, given that the process has not fixated before that block. 
Thus, for any integer $x\geq 1$, the probability to  not fixate before the end of block $x$, 
which happens at step $2\cdot \ell\cdot x$ 
is at most $2^{-x}$. 
\end{proof}

We now present the main theorem of this section, which we obtain using the above lemmas, and 
techniques from~\cite{Diaz14}.

\begin{theorem}\label{thm:fixationtime}
Let $t_1$ and $t_2$ be the two types, such that $r=w(t_1) >  w(t_2)=1$. 
Let $\Delta$ be the maximum degree.  Let $k$ be the number of nodes of type $t_2$ in the initial type function. 
The following assertions hold:
\begin{itemize}
\item {\em Bounds dependent on $r$}
\begin{enumerate}
\item {\em Expected steps} The process requires at most $3k\Delta/\min(r-1,1)$ effective steps in expectation, before fixation is reached.
\item {\em Probability} For any integer $x\geq 1$, after $6xn\Delta/\min(r-1,1)$ effective steps, the probability that the process has not 
fixated is at most $2^{-x}$, irrespective of the initial type function.
\end{enumerate}

\item {\em Bounds independent on $r$}
\begin{enumerate}
\item  {\em Expected steps}
The process requires at most $2nk\Delta^2$ effective steps in expectation, before fixation is reached.
\item  {\em Probability}
For any integer $x\geq 1$, after $4 x n^2 \Delta^2$ effective steps, the probability that the process has not fixated is at most $2^{-x}$,  irrespective of the initial type function.
\end{enumerate}

\item {\em Bounds for $r\geq 2\Delta$} 
\begin{enumerate}
\item {\em Expected steps} The process requires at most $3k$ effective steps in expectation, before fixation is reached.
\item  {\em Probability} For any integer $x\geq 1$, after $6 x n$ effective steps, the probability that the process has not fixated is at most $2^{-x}$,  irrespective of the initial type function.
\end{enumerate}
\end{itemize}
\end{theorem}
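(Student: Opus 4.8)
The plan is to derive all three \emph{expected}-step bounds from optional-stopping arguments applied to suitable (super/sub)martingales, and then to read off every \emph{probability} bound directly from Lemma~\ref{lemm:conc}. Indeed, since the number $k$ of initial $t_2$-nodes always satisfies $k\le n$, each expected-step bound gives a value $\ell$ that is an upper bound \emph{uniform} over all starting type functions, and Lemma~\ref{lemm:conc} converts such an $\ell$ into the stated $2^{-x}$ tail after $2\ell x$ effective steps. Taking $\ell=3n\Delta/\min(r-1,1)$, $\ell=2n^2\Delta^2$, and $\ell=3n$ reproduces exactly the three probability bounds $6xn\Delta/\min(r-1,1)$, $4xn^2\Delta^2$, and $6xn$. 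So it suffices to prove the three expected-step bounds.

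For the bounds \emph{dependent on} $r$, I would track $\psi_t=\psi(\f_t)$ along the modified process. By Lemma~\ref{lem:fast1step} the per-step drift is at least $\frac{r-1}{\Delta(r+1)}$, and an elementary split on $r\le2$ versus $r\ge2$ shows $\frac{r-1}{\Delta(r+1)}\ge\frac{\min(r-1,1)}{3\Delta}=:\delta$. Hence $\psi_t-\delta t$ is a submartingale with increments bounded by $1$, and optional stopping at the fixation time $\tau$ gives $\delta\,\E[\tau]\le \E[\psi_\tau]-\psi_0\le \psi_{\max}-\psi_0$, where $\psi_{\max}=\sum_{v\in V}1/\deg v$ is the value at fixation of $t_1$. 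The key simplification is $\psi_{\max}-\psi_0=\sum_{v\in V_{t_2,\f}}1/\deg v\le k$, since there are exactly $k$ initial $t_2$-nodes, each contributing at most $1$. This yields $\E[\tau]\le k/\delta=3k\Delta/\min(r-1,1)$.

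For the bounds \emph{independent of} $r$ the drift can be arbitrarily small as $r\to1$, so I would switch to a second-moment argument, which I expect to be the main obstacle. Each effective step flips one node $u$ and changes $\psi$ by exactly $\pm1/\deg u$, so $(\psi_{t+1}-\psi_t)^2\ge1/\Delta^2$; combining this with the nonnegative drift (Lemma~\ref{lem:fast1step}, valid for all $r>1$) and $\psi_t\ge0$ gives $\E[\psi_{t+1}^2-\psi_t^2\mid\mathcal F_t]=\E[(\psi_{t+1}-\psi_t)^2\mid\mathcal F_t]+2\psi_t\,\E[\psi_{t+1}-\psi_t\mid\mathcal F_t]\ge1/\Delta^2$, so $\psi_t^2-t/\Delta^2$ is a submartingale. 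Optional stopping gives $\E[\tau]/\Delta^2\le \E[\psi_\tau^2]-\psi_0^2$. Since $\psi_\tau\in\{0,\psi_{\max}\}$ we have $\E[\psi_\tau^2]=p\,\psi_{\max}^2$ with $p$ the fixation probability; writing $\psi_0=\psi_{\max}-d$ with $d=\psi_{\max}-\psi_0\le k$, the right-hand side equals $p\psi_{\max}^2-(\psi_{\max}-d)^2=-(1-p)\psi_{\max}^2+2d\psi_{\max}-d^2\le 2d\psi_{\max}\le2kn$ (using $\psi_{\max}\le n$). Hence $\E[\tau]\le2nk\Delta^2$. Getting the clean factor $2nk$ rather than a crude $n^2$ relies precisely on the decomposition $\psi_0=\psi_{\max}-d$ together with $d\le k$. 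The optional-stopping steps in all parts need $\E[\tau]<\infty$; I would justify them uniformly by first applying each (in)equality to the bounded stopping time $\tau\wedge T$, deducing from $0\le\psi\le n$ that $\E[\tau\wedge T]$ is bounded independently of $T$, and then letting $T\to\infty$ by monotone convergence to obtain both finiteness and the stated bounds.

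For the bounds for $r\ge2\Delta$, I would instead track the count $Z_t=|V_{t_2,\f_t}|$, with $Z_0=k$ and absorption at $Z=0$ (fixation of $t_1$) or $Z=n$. Applying Lemma~\ref{lem:large_r} with $x=r/\Delta\ge2$, each effective step decreases $Z$ by $1$ with probability at least $\frac{x}{x+1}\ge\frac23$ and otherwise increases it by $1$, so $\E[Z_{t+1}-Z_t\mid\mathcal F_t]\le-\frac13$ and $Z_t+t/3$ is a supermartingale with $\pm1$ increments. Optional stopping then gives $\E[\tau]/3\le Z_0-\E[Z_\tau]\le k$, i.e.\ $\E[\tau]\le3k$, completing the proof.
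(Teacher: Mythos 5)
Your proposal is correct and follows essentially the same route as the paper: the same potential functions ($\psi$ for the first two items, the $t_1$/$t_2$ count for the third), the same drift inputs (Lemmas~\ref{lem:fast1step} and~\ref{lem:large_r}), the same bookkeeping $\psi_{\max}-\psi_0\le k$ (respectively $a\le n-k$, $b\le k$ in the second-moment computation), and Lemma~\ref{lemm:conc} for all the tail bounds. The only difference is presentational: you carry out the optional-stopping arguments explicitly (stopping at $\tau\wedge T$ and passing to the limit, which also lets you avoid the paper's ``modified process'' device of resetting a random node upon $t_2$-fixation), whereas the paper delegates exactly these steps to \cite[Theorems~6 and~11]{Diaz14}.
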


\section{Lower bound for undirected graphs}\label{sec:lbundirected}
In this section, we will argue that our bound on the expected number of effective steps is essentially tight, 
for fixed $r$.

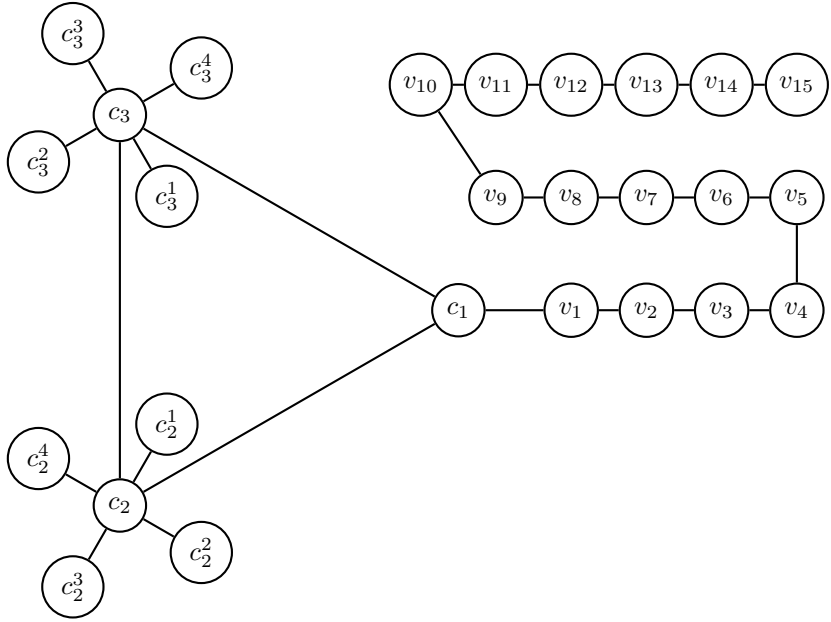
\begin{figure}
\center
\begin{tikzpicture}[thick,scale=0.5]

  \foreach \x/\y/\i in {-120/a/2,-240/b/3}{
	\node[draw,circle] at (\x:6cm) (v\y) {$c_{\i}$};  
	\begin{scope}[shift={(\x:6cm)},rotate=\x]
	\foreach \z/\w/\j in {180/white/1,90/white/2,0/white/3,-90/white/4}{
	\node[draw,circle,fill=\w] at (\z:2.5cm) (v\y\z) {$c_{\i}^{\j}$}; 
	\draw (v\y) -- (v\y\z);
	}
	\end{scope}
  }
	\node[draw,circle] at (0:6cm) (vc) {$c_1$};    
  
  \draw (va) -- (vb) -- (vc)--(va);

  \foreach \x in {1,..., 4}{
	\node[draw,circle] at ($(0:7cm)+(2*\x cm,0)$) (v\x) {$v_{\x}$};  
  }
  
  \foreach \x in {5,..., 9}{
	\node[draw,circle] at ($(v4)+(-2*\x cm,0)+(10cm,3cm)$) (v\x) {$v_{\x}$};  
  }
  
  \foreach \x in {10,..., 15}{
	\node[draw,circle] at ($(v9)+(2*\x cm,0)+(-22cm,3cm)$) (v\x) {$v_{\x}$};  
  }

  \newcommand*{\lastx}{c}
   \foreach \x[remember=\x as \lastx] in {1,..., 15}{
   \draw (v\x) -- (v\lastx);
   }

\end{tikzpicture}
\caption{Example of a member of the family that attains the lower bound for undirected graphs. (Specifically, it is $G^{6,31}$)}\label{fig:undir}
\end{figure}

We construct our lower bound graph $G^{\Delta,n}$, for given $\Delta$, $n$ (sufficiently large), but fixed $r>1$, as follows.
We will argue that fixation of $G^{\Delta,n}$ takes $\Omega(k\Delta)$ effective steps, if there are initially exactly $k$ members of type $t_2$.
For simplicity, we consider  $\Delta>2$ and $n>4\Delta$ (it is easy to see using similar techniques that for lines, where $\Delta=2$, the expected fixation time is $\Omega(k)$ - basically because $t_1$ is going to fixate with pr. $\approx 1-1/r$, using a proof like Lemma~\ref{lem:t_1fix}, and converting the $k$ nodes of type $t_2$ takes at least $k$ efficient steps).
There are two parts to the graph: A line of $\approx n/2$ nodes and a stars-on-a-cycle graph of $\approx n/2$. There is 1 edge from the one of the stars in the stars-on-a-cycle graph to the line.
More formally, the graph is as follows: 
Let $x:=\lfloor{n/(2\Delta-2)}\rfloor$.
There are nodes $V_C=\{c_1,\dots, c_x\}$, such that $c_i$ is connected to $c_{i-1}$ and $c_{i+1}$ for $1<i<x$. Also, $c_1$ is connected to $c_x$. The nodes $V_C$ are the centers of the stars in the stars-on-a-cycle graph. For each $i$, such that $2\leq  i\leq x$, the node $c_i$ is connected to a set of leaves $V_C^{i}=\{c_i^1,\dots,c_i^{\Delta-2}\}$. The set $V_C\cup \bigcup_{i=2}^{x} V_C^i$ forms the stars-on-a-cycle graph. Note that $c_1$ is only connected to $c_2$ and $c_{x}$ in the stars-on-a-cycle graph. We have that the stars-on-a-cycle graph consists of $s=(x-1)\cdot (\Delta-1)+1\approx n/2$ nodes.
There are also nodes $V_L=\{\ell_1,\dots,\ell_{n-s}\}$, such that node $\ell_i$ is connected to $\ell_{i-1}$ and $\ell_{i+1}$ for $1<i<n/2$. The nodes $V_L$ forms the line and consists of $n-s\geq  n/2$ nodes. The node $c_1$ is connected to $\ell_1$.
There is an illustration of $G^{6,31}$ in Figure~\ref{fig:undir}.

We first argue that if at least one of $V_L'=\{\ell_{\lceil n/4\rceil},\dots,\ell_{n-s}\}$ is initially of type $t_1$, then with pr. lower bounded by a number depending only on $r$, type $t_1$ fixates (note that $|V_L'|\geq  n/4$ and thus, even if there is only a single node of type $t_1$ initially placed uniformly at random, it is in $V_L'$ with pr. $\geq 1/4$).

\begin{lemma}\label{lem:t_1fix}
With pr. above $\frac{1-1/r}{2}$ if at least one of $V_L'$ is initially of type $t_1$, then $t_1$ fixates.
\end{lemma}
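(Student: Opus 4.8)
The plan is to exploit the one-dimensional structure of the line, where the set of $t_1$-nodes behaves like a biased random walk that, because $r>1$, drifts toward growth. First I would establish the invariant that, as long as every $t_1$-node lies on the line $V_L$, the set $V_{t_1,\f}$ is a contiguous sub-interval $\{\ell_a,\dots,\ell_b\}$: on a path the only edges joining the two types are the two interface edges $(\ell_{a-1},\ell_a)$ and $(\ell_b,\ell_{b+1})$, and a modified dynamic evolution step changes the type of exactly one endpoint of such an edge, so contiguity is preserved by induction. Consequently, while both interfaces sit on interior (degree-$2$) line nodes, each effective step moves an interval boundary and changes $N:=|V_{t_1,\f}|$ by $+1$ or $-1$; a direct computation from the modified step probabilities (a boundary $t_1$-node has weight $r$ and factor $|\Gamma|/\deg=1/2$, a boundary $t_2$-node weight $1$ and factor $1/2$, so $W'=r+1$) gives $\Pr[N\to N+1]=\frac{r}{r+1}$ and $\Pr[N\to N-1]=\frac{1}{r+1}$.

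Next I would run a gambler's-ruin argument on $N$. In the interior regime $M=(1/r)^{N}$ is a martingale, so optional stopping between the absorbing events ``$N=0$'' (extinction of $t_1$) and ``the interval has grown large'' shows that, starting from the single mutant $N_0=1$, the probability that $N$ never returns to $0$ is at least $1-1/r$ --- exactly the escape probability $1-q/p$ of a walk with up-probability $p=\frac{r}{r+1}$ and down-probability $q=\frac{1}{r+1}$. The hypothesis that the initial $t_1$-node lies in $V_L'=\{\ell_{\lceil n/4\rceil},\dots,\ell_{n-s}\}$, i.e.\ at distance $\ge n/4$ from $\ell_1$, guarantees a long interior buffer on the left, so the interval has room to grow under the clean interior dynamics before any boundary is reached; thus with probability at least $1-1/r$ the mutant survives and the interval reaches a large size rather than dying out.

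Finally I would convert survival into fixation of the whole graph, which is where the factor $\tfrac12$ enters. Once the interval is large, $t_1$ enjoys a strictly positive drift in the potential $\psi$ by Lemma~\ref{lem:fast1step}, and from such a dominant configuration one argues that $t_1$ --- rather than $t_2$ --- fixates with probability at least $\tfrac12$; multiplying the two stages yields the bound $\frac{1-1/r}{2}$. I expect the main obstacle to be precisely the non-interior behaviour that the clean random walk ignores: the degree-$1$ right endpoint $\ell_{n-s}$ (whose resident, when adjacent to a mutant, reproduces back at the full rate $1$ and hence locally favours $t_2$ when $r<2$) and the sticky degree-$1$ leaves $c_i^j$ that must all be converted for fixation. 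The crux is to show these effects cost at most a bounded factor: one couples the true interval process to the pure biased walk to see that the unfavourable right interface is dominated by the favourable left interface, and uses the drift of $\psi$ together with a concentration bound as in Lemma~\ref{lemm:conc} to guarantee that the cycle and its leaves are eventually swallowed once $t_1$ has established a large, surviving interval.
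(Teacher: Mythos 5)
Your stage-one analysis (contiguity of the $t_1$-interval on the line, the interface probabilities $\frac{r}{r+1}$ vs.\ $\frac{1}{r+1}$, and the gambler's-ruin escape probability $1-1/r$ from a single mutant) matches the paper's argument. The genuine gap is in stage two, which is also where your factor $\frac12$ comes from. You claim that once the interval is large, $t_1$ fixates with probability at least $\frac12$ ``by the positive drift of $\psi$'' from Lemma~\ref{lem:fast1step}. A positive one-step drift does not by itself give a fixation probability of $\frac12$; the rigorous version of that appeal is optional stopping on the submartingale $\psi$, which yields $\Pr[t_1\text{ fixates}]\geq \psi(\f)/\sum_{v}(1/\deg v)$. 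For this particular graph that bound fails to reach $\frac12$: when $t_1$ occupies all of $V_L$ we have $\psi(\f)\approx |V_L|/2\approx n/4$, while the denominator also counts the $\approx n/2\cdot\frac{\Delta-2}{\Delta-1}$ degree-one leaves $c_i^j$, each contributing $1$, so the ratio can be closer to $\frac13$. Hence the product of your two stages does not deliver the stated bound, and the step ``from such a dominant configuration one argues that $t_1$ fixates with probability at least $\frac12$'' is exactly the missing argument, not a routine consequence of the drift.

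The paper allocates the constants differently and avoids this problem. Its factor $\frac12$ does not come from the second stage at all: it comes from a symmetry argument in the first stage, namely that while neither $\ell_1$ nor $\ell_{n-s}$ has yet been of type $t_1$, the left and right interfaces of the interval are equally likely to advance, so with probability above $\frac12$ the far endpoint $\ell_{n-s}$ is converted before $\ell_1$; this guarantees that the pure-line gambler's-ruin computation applies unperturbed by the attachment of $\ell_1$ to $c_1$, giving the combined probability $\approx\frac{1-1/r}{2}$ that $V_L$ becomes entirely $t_1$. The second stage is then shown to succeed not with probability $\frac12$ but with probability $1-o(1)$: each ``attempt'' by $t_2$ to re-invade the line (initiated whenever $\ell_1$ turns $t_2$) succeeds with probability $\approx r^{-(n-s-1)}$ by another gambler's ruin, and the number of attempts is bounded by the expected number of effective steps $O(k\Delta/\min(r-1,1))$ from Theorem~\ref{thm:fixationtime}, so a union bound makes the total failure probability exponentially small. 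If you want to salvage your outline, you should either adopt this two-gambler's-ruin structure or replace your drift appeal with an explicit argument (e.g.\ the attempts/union-bound argument) showing that the second stage succeeds with probability close to $1$ rather than $\frac12$.
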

The proof is based on applying the gambler's ruin twice. Once to find out that the pr. that $V_L$ eventually becomes all $t_1$ is above $\frac{1-1/r}{2}$ (it is nearly $1-1/r$ in fact) and once to find out that if $V_L$ is at some point all $t_1$, then the pr. that $t_2$ fixates is exponentially small with base $r$ and exponent $n-s$.
See the appendix for the proof. 

Whenever a node of $V_C^i$, for some $i$, changes type, we say that a {\em leaf-step} occurred. We will next consider the pr. that an effective step is a leaf-step.

\begin{lemma}\label{lem:leaf-step}
The pr. that an effective step is a leaf-step is at most $ \frac{r}{\Delta}$.
\end{lemma}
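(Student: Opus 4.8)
The plan is to bound the probability that an effective step is a leaf-step by analyzing the modified dynamic evolution step directly. Recall that in the modified step, a node $v$ is picked with probability proportional to $p(v)=w(\f(v))\cdot \frac{|\Gamma_v(\f)|}{\deg v}$, and then a successor $u\in\Gamma_v(\f)$ is picked uniformly at random; the type of $u$ then flips. A leaf-step occurs precisely when the node $u$ that changes type lies in some $V_C^i$. Since each leaf $c_i^j$ has degree $1$ (its only neighbor is the center $c_i$), a leaf changes type if and only if $u$ is that leaf, which forces $v=c_i$ and requires $\f(c_i)\neq\f(c_i^j)$. So the event ``this effective step is a leaf-step'' is the event that the chosen ordered pair $(v,u)$ has $u$ a leaf.

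\textbf{Key steps.} First I would write the probability of a leaf-step as a ratio. For a fixed type function $\f$ (with neither type fixated), the probability that the effective step changes the type of a particular leaf $u=c_i^j$ equals $\frac{p(c_i)}{W'(\f)}\cdot\frac{1}{|\Gamma_{c_i}(\f)|}$, summed over all leaves that currently differ in type from their center. The factor $\frac{p(c_i)}{|\Gamma_{c_i}(\f)|}=\frac{w(\f(c_i))}{\deg c_i}$ telescopes nicely. Thus the total leaf-step probability is $\frac{1}{W'(\f)}\sum_{i}\frac{w(\f(c_i))}{\deg c_i}\cdot(\text{number of leaves of }c_i \text{ differing from }c_i)$. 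Since $w(\f(c_i))\leq r$ and $\deg c_i=\Delta$ for every center $c_i$ with $i\geq 2$ (each such center has $\Delta-2$ leaves plus two cycle-neighbors), each contributing leaf contributes at most $\frac{r}{\Delta}$ per unit to the numerator, while $W'(\f)$ in the denominator can be bounded below by the number of type-discordant leaf edges (mirroring the computation of $W'$ in Lemma~\ref{lem:fast1step}).

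\textbf{The main bound.} The heart of the argument is to show the numerator is at most $\frac{r}{\Delta}$ times the denominator. Writing $L$ for the number of discordant center–leaf edges, the numerator is at most $\frac{r}{\Delta}\cdot L$. For the denominator, recall $W'(\f)=\sum_{\substack{(v,u)\in E,\,\f(u)\neq\f(v)}}\frac{w(\f(u))}{\deg u}$; every discordant center–leaf edge already contributes at least its share to $W'(\f)$, and since leaves have degree $1$ the leaf endpoint contributes $\frac{w(\f(\text{leaf}))}{1}\geq 1$ to this sum. Hence $W'(\f)\geq L$, giving leaf-step probability $\leq\frac{(r/\Delta)L}{L}=\frac{r}{\Delta}$.

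\textbf{Main obstacle.} The step I expect to require the most care is bounding $W'(\f)$ from below by $L$ cleanly: one must make sure that the contributions to $W'(\f)$ from the leaf endpoints of discordant center–leaf edges are counted without double-counting against other discordant edges, and verify that using $\deg(\text{leaf})=1$ together with $w\geq 1$ does genuinely yield a contribution of at least $1$ per discordant leaf edge. Once that lower bound is secured, the matching upper bound on the numerator via $w(\f(c_i))\leq r$ and $\deg c_i=\Delta$ makes the ratio collapse to $\frac{r}{\Delta}$, completing the proof.
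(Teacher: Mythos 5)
Your proof is correct, and the underlying comparison is the same one the paper exploits---the center's weight per discordant leaf is at most $w(\f(c_i))/\deg c_i\leq r/\Delta$, while each discordant leaf itself carries weight at least $w(t')/1\geq 1$---but your decomposition is genuinely different. The paper conditions on the reproducing node lying in a single star cluster $\{c_i\}\cup V_C^i$ and bounds the conditional probability of a leaf-step within that cluster by $\frac{w(t)}{\Delta\, w(t')}\leq \frac{r}{\Delta}$, implicitly recombining over clusters at the end; you instead write the exact global leaf-step probability as $\frac{1}{W'(\f)}\sum_i \frac{w(\f(c_i))}{\deg c_i}\,x_i$, bound the numerator by $\frac{r}{\Delta}L$ where $L=\sum_i x_i$ is the number of discordant center--leaf edges (using $\deg c_i=\Delta$ for every center with leaves), and bound $W'(\f)\geq L$ via the ordered-edge expansion of $W'$ together with the fact that distinct discordant leaf edges have distinct degree-one leaf endpoints, so there is no double counting---exactly the obstacle you flagged, and it does go through (the case $L=0$ is trivial). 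Your global accounting buys a somewhat cleaner and more rigorous argument: it avoids the informal per-cluster conditioning, and it automatically charges the center its modified weight $w(t)\cdot\frac{x'}{\Delta}$ in the normalization, whereas the paper's displayed denominator $xw(t')+w(t)$ uses the unmodified fitness $w(t)$ of $c_i$ --- harmless for the final bound, since one may simply discard the center's term from the true normalizer $xw(t')+w(t)x'/\Delta$, but an inaccuracy your route sidesteps. The paper's version is in turn more local and needs no identity for $W'(\f)$ as a sum over discordant ordered edges.
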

The proof is quite direct and considers that the probability that a leaf gets selected for reproduction over a center node in the stars-on-a-cycle graph.
See the appendix for the proof.

We are now ready for the theorem.

\begin{theorem}\label{thm:lbundirected}
Let $r>1$ be some fixed constant.
Consider $\Delta>2$ (the maximum degree of the graph), $n>4\Delta$ (sufficiently big), and some $k$ such that $0<k<n$. Then, if there are initially $k$ members of type $t_2$ placed uniformly at random, the expected fixation time of $G^{\Delta,n}$ is above $\frac{k\Delta(1-1/r)}{32r}$ effective steps.
\end{theorem}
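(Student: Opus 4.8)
The plan is to play the two structural lemmas against each other: converting leaves is unavoidable whenever $t_1$ fixates, yet an effective step is a \emph{leaf-step} only with probability $O(r/\Delta)$, so fixation must consume many effective steps. Write $N$ for the number of effective steps to fixation, let $L$ be the number of leaves (nodes lying in some $V_C^i$) that are initially of type $t_2$, and let $F_1$ be the event that $t_1$ fixates. The first step is to show leaf-steps are a genuine bottleneck: on $F_1$ every leaf ends up of type $t_1$, so each of the $L$ leaves that start as $t_2$ must change type at least once, and every change of a leaf's type is by definition a leaf-step. Hence the number of leaf-steps is at least $L\cdot\mathbb{1}(F_1)$, so that $\E[\#\text{leaf-steps}]\ge \E[L\,\mathbb{1}(F_1)]$.

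Next I would convert Lemma~\ref{lem:leaf-step} into the quantitative bound $\E[\#\text{leaf-steps}]\le \tfrac{r}{\Delta}\,\E[N]$. Since that lemma bounds, uniformly over the current non-fixated configuration, the conditional probability that the next effective step is a leaf-step by $r/\Delta$, the process $M_t=\sum_{i\le t}X_i-\tfrac{r}{\Delta}\,t$, where $X_i$ indicates that the $i$-th effective step is a leaf-step, is a supermartingale with increments bounded by $1+r/\Delta$. Theorem~\ref{thm:fixationtime} guarantees $\E[N]<\infty$, so optional stopping at the stopping time $N$ gives $\E\big[\sum_{i\le N}X_i\big]\le \tfrac{r}{\Delta}\,\E[N]$, that is $\E[N]\ge \tfrac{\Delta}{r}\,\E[\#\text{leaf-steps}]\ge \tfrac{\Delta}{r}\,\E[L\,\mathbb{1}(F_1)]$.

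It then remains to lower-bound $\E[L\,\mathbb{1}(F_1)]$, which I would do by decoupling the fixation probability from the leaf colouring. Conditioning on the initial placement, Lemma~\ref{lem:t_1fix} gives $\Pr[F_1\mid\text{placement}]\ge \tfrac{1-1/r}{2}$ whenever at least one node of $V_L'$ is left of type $t_1$ (call this event $A$), and this bound holds regardless of how the leaves are coloured. Thus $\E[L\,\mathbb{1}(F_1)]\ge \tfrac{1-1/r}{2}\,\E[L\,\mathbb{1}(A)]$. A direct counting argument on the uniform placement of the $k$ type-$t_2$ nodes closes the estimate: the leaves are a constant fraction of all $n$ nodes (with $x-1\approx n/(2\Delta-2)$ stars each carrying $\Delta-2$ leaves, the leaf count is at least a fixed fraction of $n$ once $n$ is large, since $\tfrac{\Delta-2}{2(\Delta-1)}\ge \tfrac14$ for $\Delta>2$), so $\E[L]=k\cdot\tfrac{\#\text{leaves}}{n}=\Omega(k)$, and $\E[L\,\mathbb{1}(A)]$ is of the same order. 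Chaining the three inequalities yields $\E[N]\ge \tfrac{\Delta}{r}\cdot\tfrac{1-1/r}{2}\cdot\Omega(k)$; carefully tracking the constants (the floor in $x$, the ratio $(\Delta-2)/(2\Delta-2)$, and the $\mathbb{1}(A)$ correction) brings this down to the stated $\tfrac{k\Delta(1-1/r)}{32r}$.

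I expect the third step to be the main obstacle, specifically controlling the interaction between $L$ and the fixation event $F_1$, since both depend on the same random placement. The clean resolution is that Lemma~\ref{lem:t_1fix} gives a lower bound on $\Pr[F_1]$ that is uniform over all leaf-colourings—it needs only a single $t_1$ foothold in $V_L'$—which decouples $\Pr[F_1\mid\cdot]$ from $L$ and lets the expectation factor. What remains is the placement counting, which is routine except in the large-$k$ regime where $A$ can fail: when $k<|V_L'|$ the event $A$ is certain, while for larger $k$ the positive correlation between ``many $t_2$-leaves'' and ``some $t_1$ in $V_L'$'' (both favoured when few $t_2$ tokens land in $V_L'$) keeps $\E[L\,\mathbb{1}(A)]$ a constant fraction of $\E[L]$. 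This is precisely where the generous constant $1/32$, rather than the $\approx 1/12$ the raw estimates suggest, absorbs all the accumulated slack.
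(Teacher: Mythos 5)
Your proposal is correct and follows essentially the same route as the paper's proof: combine the constant fixation probability from Lemma~\ref{lem:t_1fix}, the fact that a constant fraction ($\geq n/4$) of the nodes are leaves so that $\Omega(k)$ leaf-steps are forced on the fixation event, and Lemma~\ref{lem:leaf-step} to convert the required leaf-steps into $\Omega(k\Delta/r)$ effective steps. Your write-up is in fact slightly more careful than the paper's, which simply multiplies the unconditional $\E[L]$ by $\Pr[F_1\cap A]$; you make the two glossed-over points explicit via the optional-stopping/supermartingale argument for $\E[\#\text{leaf-steps}]\leq \frac{r}{\Delta}\E[N]$ and the positive association of $L$ with the event $A$.
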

\begin{proof}
Even if $k=n-1$, we have that with pr. at least $\frac{1}{4}$, the lone node of type $t_1$ is initially in $V_L'$. If so, by Lemma~\ref{lem:t_1fix}, type $t_1$ is going to fixate with pr. at least $\frac{1-1/r}{2}$. 
Note that even for $\Delta=3$, at least $\frac{n}{4}$ nodes of the graphs are in $V':=\bigcup_{i=2}^{x} V_C^i$ (i.e. the leaves of the stars-on-a-cycle graph). In expectation $\frac{k}{4}$ nodes of $V'$ are thus initially of type $t_2$. For fixation for $t_1$ to occur, we must thus make that many leaf-steps. Any effective step is a leaf-step with pr. at most $\frac{r}{\Delta}$ by Lemma~\ref{lem:leaf-step}. Hence, with pr. $\frac{1}{4}\cdot \frac{1-1/r}{2}$ ($\frac{1}{4}$ is the probability that at least one node of type $t_1$ is in $V_L'$ and $\frac{1-1/r}{2}$ is a lower bound on the fixation probability if a node of $V_L'$ is of type $t_1$) we must make $\frac{k\Delta}{4r}$ effective steps before fixation in expectation, implying that the expected fixation time is at least $\frac{k\Delta(1-1/r)}{32r}$ effective steps.
\end{proof}

\section{Sampling an effective step\label{sec:sample}}
In this section, we consider the problem of sampling an effective step. 
It is quite straightforward to do so in $O(m)$ time. 
We will present a data-structure that after $O(m)$ preprocessing can sample and 
update the distribution in $O(\Delta)$ time. 
For this result we assume that a uniformly random number can be selected between $0$ and $x$ 
for any number $x\leq n\cdot w(t)$ in constant time, a model that was also implicitly 
assumed in previous works~\cite{Diaz14}\footnote{The construction of~\cite{Diaz14} was to store a list for $t_1$ and a list for $t_2$ and then first decide if a $t_1$ or $t_2$ node would be selected in this step (based on $r$ and the number of nodes of the different types) and then pick a random such node. This works when all nodes of a type has the same weight but does not generalize to the case when each node can have a distinct weight based on the nodes successors like here}. 

\begin{remark}
If we consider a weaker model, that requires constant time for each random bit, then we need $O(\log n)$ random bits in expectation and additional $O(\Delta)$ 
amortized time, using a similar data-structure (i.e., a total of $O(\Delta+\log n)$ amortized 
time in expectation). The argument for the weaker model is presented in the Appendix. In this more restrictive model~\cite{Diaz14} would use $O(\log n)$ time per step for sampling.
\end{remark}

\subparagraph*{Sketch of data-structure}
We first sketch a list data-structure that supports 
(1)~inserting elements; (2)~removing elements; and (3)~finding a random element; 
such that each operation takes (amortized or expected) $O(1)$ time. 
The idea based on dynamic arrays is as follows:
\begin{enumerate}
\item  {\bf Insertion} Inserting elements takes $O(1)$ amortized time in a dynamic array, using the standard construction.  
\item  {\bf Deletion} Deleting elements is handled by changing the corresponding element to a null-value and then rebuilding the array, without the null-values, 
if more than half the elements have been deleted since the last rebuild. Again, this takes $O(1)$ amortized time. 
\item {\bf Find random element} Repeatedly pick a uniformly random entry. If it is not null, then  output it. Since the array is at least half full, 
this takes in expectation at most 2 attempts and thus expected $O(1)$ time.  
\end{enumerate}
At all times we keep a doubly linked list of empty slots, to find a slot for insertion in $O(1)$ time. 

\subparagraph*{Data-structure}
The idea is then as follows. We have $2\Delta$ such list data-structures, one for each pair of type and degree. 
We also have a weight associated to each list,  which is the sum of the weight of all nodes in the list, according to the modified dynamic evolution step. 
When the current type function is $\f$, we represent each node $v$ as follows: The corresponding list data-structure contains $|\Gamma_v(\f)|$ copies of $v$ (and $v$ keeps track of the locations in a doubly linked list).  Each node $v$ also keeps track of $\Gamma_v(\f)$, using another list data-structure. 
It is easy to construct the initial data-structure in $O(m)$ time (note: $\sum_v|\Gamma_v(\f)|\leq 2m$).

\subparagraph*{Updating the data-structure}
We can then update the data-structure when the current type function $\f$ changes to $\f[u\rightarrow t]$ 
(all updates have that form for some $t$ and $u$), by removing $u$ from the list data-structure $(\f(u),\deg u)$ 
containing it and adding it to $(t,\deg u)$. 
Note that if we removed $x'$ copies of $u$ from $(\f(u),\deg u)$ we add $\deg u-x'$ to $(t,\deg u)$. 
Also, we update each neighbor $v$ of $u$ (by deleting or adding a copy to $(\f(v),\deg v)$, 
depending on whether $\f(v)=t$). We also keep the weight corresponding to each list updated and $\Gamma_v(\f)$ 
for all nodes $v$. This takes at most $4\Delta$ data-structure insertions or deletions, and thus $O(\Delta)$ amortized time in total.

\subparagraph*{Sampling an effective step}
Let $\f$ be the current type function.
First, pick a random list $L$ among the $2\Delta$ lists, proportional to their weight.
 Then pick a random node $v$ from $L$. Then pick a node at random in $\Gamma_v(\f)$. This takes $O(\Delta)$ time in expectation.

\begin{remark}
Observe that picking a random list among the $2\Delta$ lists, proportional to their weight takes $O(\Delta)$ time to do naively: E.g. consider some ordering of the lists and let $w_i$ be the total weight of list $i$ (we keep this updated so it can be found in constant time). Pick a random number $x$ between 1 and the total weight of all the lists (assumed to be doable in constant time). Iterate over the lists in order and when looking at list $i$, check if $x< \sum_{j=1}^i w_j$. If so, pick list $i$, otherwise continue to list $i+1$. By making a binary, balanced tree over the lists (similar to what is used for the more restrictive model, see the Appendix), the time can be brought down to $O(\log \Delta)$ for this step - however the naive approach suffices for our application, because updates requires $O(\Delta)$ time.
\end{remark}

This leads to the following theorem.

\begin{theorem}\label{thm:sampling}
An effective step can be sampled in (amortized and expected) $O(\Delta)$ time after $O(m)$ preprocessing, if a uniformly random integer between $0$ and $x$, for any $0<x\leq n\cdot w(t)$, can be found in constant time. 
\end{theorem}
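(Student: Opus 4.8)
\section*{Proof proposal}

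The plan is to build the claimed bound out of three ingredients: a primitive list data structure supporting insertion, deletion and uniform-random-element in $O(1)$ (amortized or expected) time; a composite structure assembled from the $2\Delta$ such lists together with bookkeeping for each $\Gamma_v(\f)$; and a verification that the resulting sampling procedure reproduces \emph{exactly} the distribution of the modified dynamic evolution step at the stated cost. I would first pin down correctness of the distribution and only then prove the three cost claims (preprocessing $O(m)$, update $O(\Delta)$, sample $O(\Delta)$) separately.

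First I would justify the primitive list data structure as sketched, stating it as a lemma: dynamic arrays give $O(1)$ amortized insertion; lazy deletion with a rebuild once half the slots are null gives $O(1)$ amortized deletion and keeps the array at least half full; and \emph{find random element} repeatedly samples a uniform slot, succeeding with probability $\ge 1/2$ per trial, hence runs in expected $O(1)$ time and uses $O(1)$ random draws in the assumed model. A doubly linked list of free slots lets insertion locate a slot in $O(1)$. These facts are standard.

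The heart of the argument is correctness of the sampling distribution. Here I would compute the probability that the procedure outputs a given ordered pair $(v,u)$ and match it against $\frac{p(v)}{W'(\f)}\cdot\frac{1}{|\Gamma_v(\f)|}$. Writing $N_{t,d}$ for the number of copies stored in the list $(t,d)$ and using that every node there shares the same weight $w(t)$ and degree $d$, the list weight is $W_{t,d}=\frac{w(t)}{d}\,N_{t,d}$; choosing a list proportional to weight and then a uniform copy selects node $v$ with probability $\frac{W_{t,d}}{W'(\f)}\cdot\frac{|\Gamma_v(\f)|}{N_{t,d}}=\frac{w(\f(v))\,|\Gamma_v(\f)|}{\deg v\cdot W'(\f)}=\frac{p(v)}{W'(\f)}$, after which the uniform choice from the per-node $\Gamma_v(\f)$ list supplies the remaining factor $\frac{1}{|\Gamma_v(\f)|}$. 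Since the outcome $\f[u\to\f(v)]$ is a deterministic function of $(v,u)$, this matches the modified step, which by Lemma~\ref{lem:preserve} is the correct conditional distribution of an effective step.

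Finally I would bound the three costs. Preprocessing inserts $\sum_v|\Gamma_v(\f)|\le 2m$ copies plus the per-node $\Gamma_v(\f)$ lists, so $O(m)$. For an update $\f\mapsto\f[u\to t]$, the only nodes whose type or $\Gamma$-set changes are $u$ and its at most $\Delta$ neighbors; moving $u$ between lists touches at most $\deg u\le\Delta$ copies, and each neighbor contributes $O(1)$ insertions/deletions and one weight adjustment, for $O(\Delta)$ amortized time. Sampling costs $O(\Delta)$ to scan the $2\Delta$ list weights plus $O(1)$ expected for the two random-element calls. The main obstacle is the correctness bookkeeping: ensuring that the stored multiplicity $|\Gamma_v(\f)|$ for every node, the per-list weights, and the per-node $\Gamma_v(\f)$ lists all remain consistent under an update that simultaneously flips $u$'s type and perturbs each neighbor's $\Gamma$-set, since a single sign error in the multiplicities would silently corrupt the sampled distribution; the amortized analysis must also survive the fact that one update can add or remove $\Theta(\Delta)$ copies at once.
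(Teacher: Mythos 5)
Your proposal is correct and follows essentially the same route as the paper: the same primitive list structure (dynamic arrays with lazy deletion and rejection sampling), the same $2\Delta$ lists indexed by (type, degree) with $|\Gamma_v(\f)|$ copies of each node and per-node $\Gamma_v(\f)$ lists, and the same $O(m)$/$O(\Delta)$/$O(\Delta)$ cost accounting. The only difference is that you explicitly verify that the sampled pair $(v,u)$ has probability $\frac{p(v)}{W'(\f)}\cdot\frac{1}{|\Gamma_v(\f)|}$, a check the paper leaves implicit; your computation of it is correct.
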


\newcommand{\distr}{\mathcal{D}}
\newcommand{\cale}{\mathcal{E}}
\newcommand{\calb}{\mathcal{B}}
\newcommand{\calf}{\mathcal{F}}
\newcommand{\calx}{\mathcal{X}}

\section{Algorithms for approximating fixation probability}\label{sec:algo}
We present the algorithms for solving 
the fixation, extinction, and generalized fixation problems.

\subparagraph*{The Meta-simulation algorithm}
Similar to~\cite{Diaz14}, the algorithms are instantiating the following meta-simulation 
algorithm, that takes a distribution over initial type functions $\distr$, type $t$ 
and natural numbers $u$ and $z$ as input:

\begin{function}[]
Let $y\leftarrow 0$\;
\For{$(i\in \{1,\dots,z\})$}{
Initialize a new simulation $I$ with initial type function $\f$ picked according to $\distr$\;
Let $j\leftarrow 0$\;
\While{$(I$ has not fixated$)$} {
\If {$(j\geq u)$} {
\Return Simulation took too long\;
}
Set $j\leftarrow j+1$\;
Simulate an effective step in $I$\;
}
\If {$(t$ fixated in $I)$}{
Set $y\leftarrow y+1$\; 
}}
\Return $y/z$\;
\caption{MetaSimulation($t$,$z$,$u$,$\distr$)}
\end{function}

\subparagraph*{Basic principle of simulation}
Note that the meta-simulation algorithm uses $O(uz\Delta)$ time (by Theorem~\ref{thm:sampling}). In essence, the algorithm runs $z$ simulations of the process and terminates with ``Simulation took too long'' 
iff some simulation took over $u$ steps. Hence, whenever the algorithm returns a number it is the mean of $z$ binary random variables, 
each equal to~1 with probability $\Pr[\calf_t\mid \cale_u]$, where $\calf_t$ is the event that $t$ fixates and $\cale_u$ is the event 
that fixation happens before $u$ effective steps, when the initial type function is picked according to $\distr$ (we note that the 
conditional part was overlooked in~\cite{Diaz14}, moreover, instead of steps we consider only effective steps).
By ensuring that $u$ is high enough and that the approximation is tight enough (basically, that $z$ is high enough), 
we can use $\Pr[\calf_t\mid \cale_u]$ as an approximation of $\Pr[\calf_t]$, as shown in the following lemma.

\begin{lemma}\label{lem:meta_condition}
Let $0<\epsilon<1$ be given. Let $\calx,\cale$ be a pair of events and $x$ a number, such that  
$\Pr[\cale]\geq 1-\frac{\epsilon\cdot \Pr[\calx]}{4}$ and that $x\in [(1-\epsilon/2)\Pr[\calx\mid \cale],(1+\epsilon/2)\Pr[\calx\mid \cale]]$. 
Then
\[
x\in [(1-\epsilon)\cdot \Pr[\calx],(1+\epsilon)\cdot \Pr[\calx]]\enspace .
\]
\end{lemma}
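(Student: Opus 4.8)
The plan is to treat this as a pure calibration lemma relating the conditional probability $\Pr[\calx\mid\cale]$ to the unconditional $\Pr[\calx]$, and then to chase the two given inequalities through. First I would abbreviate $p=\Pr[\calx]$, $q=\Pr[\cale]$ and $c=\Pr[\calx\mid\cale]=\Pr[\calx\cap\cale]/q$, and record the two consequences of the hypothesis $\Pr[\cale]\ge 1-\epsilon p/4$ that I will actually use: since $p\le 1$ we have $q\ge 1-\epsilon/4$, and trivially $q\le 1$ with $\Pr[\overline{\cale}]=1-q\le \epsilon p/4$.

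The core step is to sandwich $c$ between constant multiples of $p$. For the upper bound I would use $\Pr[\calx\cap\cale]\le \Pr[\calx]=p$ together with $q\ge 1-\epsilon/4$, giving $c\le p/(1-\epsilon/4)$. For the lower bound I would write $\Pr[\calx\cap\cale]=p-\Pr[\calx\cap\overline{\cale}]\ge p-\Pr[\overline{\cale}]\ge p(1-\epsilon/4)$ and then divide by $q\le 1$ to obtain $c\ge p(1-\epsilon/4)$.

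Finally I would combine these with the hypothesis $x\in[(1-\epsilon/2)c,(1+\epsilon/2)c]$. The upper side gives $x\le (1+\epsilon/2)\,p/(1-\epsilon/4)$, so it suffices to verify the scalar inequality $(1+\epsilon/2)\le (1+\epsilon)(1-\epsilon/4)$, which rearranges to $0\le (\epsilon/4)(1-\epsilon)$. The lower side gives $x\ge (1-\epsilon/2)(1-\epsilon/4)\,p$, so it suffices to verify $(1-\epsilon/2)(1-\epsilon/4)\ge 1-\epsilon$, which rearranges to $\epsilon/4+\epsilon^2/8\ge 0$. Both hold for $0<\epsilon<1$, yielding $x\in[(1-\epsilon)p,(1+\epsilon)p]$ as required.

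I do not expect a genuine obstacle here: the only things to watch are the bookkeeping on inequality directions (dividing by $q\le 1$ for the lower bound versus invoking $q\ge 1-\epsilon/4$ for the upper bound) and remembering to use $p\le 1$ so that the $p$-dependent denominator $1-\epsilon p/4$ can be replaced by the cleaner $1-\epsilon/4$. The slack in both final scalar inequalities is $\Theta(\epsilon^2)$, which indicates that the constants in the statement (the factor $4$ in the hypothesis on $\Pr[\cale]$ and the half-width $\epsilon/2$ in the conditional approximation) are calibrated precisely to reach the $(1\pm\epsilon)$ target.
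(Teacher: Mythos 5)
Your proof is correct and follows essentially the same route as the paper: both arguments sandwich $\Pr[\calx\mid\cale]$ between $(1-\epsilon/4)\Pr[\calx]$ and a $(1+O(\epsilon))\Pr[\calx]$ upper bound and then multiply through by the $(1\pm\epsilon/2)$ factors, the only cosmetic difference being that you bound the conditional probability via $\Pr[\calx\cap\cale]/q$ directly while the paper rearranges the law of total probability, yielding $(1+\epsilon/4)\Pr[\calx]$ in place of your $\Pr[\calx]/(1-\epsilon/4)$. All your inequality directions and the final scalar verifications check out.
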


\subparagraph*{The value of $\bm{u}$: $\bm{u_{z,r}}$}
Consider some fixed value of $z$.
The value of $u$ is basically just picked so high that $\Pr[\cale_u]\geq 1-\frac{\epsilon\cdot \Pr[\calf_t]}{4}$ 
(so that we can apply Lemma~\ref{lem:meta_condition}) and such that after taking union bound over the $z$ trials, 
we have less than some constant probability of stopping. 
The right value of $u$ is thus sensitive to $r$, but in all cases at most 
$O(n^2 \Delta^2\max(\log z,\log \epsilon^{-1}))$, because of Theorem~\ref{thm:fixationtime}.
More precisely, we let 
\[
u_{z,r}=\begin{cases}
30n\cdot \max(\log z,\log \epsilon^{-1}) &\ift r\geq 2\Delta\\
\frac{30n\Delta}{\min(r-1,1)}\cdot \max(\log z,\log \epsilon^{-1}) &\ift 1+\frac{1}{n\cdot \Delta}\leq r< 2\Delta\\
20n^2\Delta^2\cdot \max(\log z,\log \epsilon^{-1}) &\ift r<1+\frac{1}{n\cdot \Delta}\enspace .
\end{cases}
\]

\subparagraph*{Algorithm $\bm{\algo1}$}
We consider the fixation problem for $t_1$.
Algorithm $\algo1$ is as follows:
\begin{enumerate}
\item Let $\distr$ be the uniform distribution over the $n$ type functions where exactly one node is $t_1$.
\item Return MetaSimulation($t_1$,$z$,$u_{z,r}$,$\distr$), for $z=48\cdot \frac{n}{\epsilon^2}$.
\end{enumerate}

\subparagraph*{Algorithm $\bm{\algo2}$}
We consider the extinction problem for  $t_1$. Algorithm $\algo2$ is as follows:
\begin{enumerate}
\item Let $\distr$ be the uniform distribution over the $n$ type functions where exactly one node is $t_2$.
\item 
Return MetaSimulation($t_1$,$z$,$u_{z,r}$,$\distr$), for $z=24/\epsilon^{2}$.
\end{enumerate}

\subparagraph*{Algorithm $\bm{\algo3}$}
We consider the problem of (additively) approximating the fixation probability given some 
type function $\f$ and type $t$.
Algorithm $\algo3$ is as follows:
\begin{enumerate}
\item Let $\distr$ be the distribution that assigns $1$ to $\f$.
\item Return MetaSimulation($t$,$z$,$u_{z,r}$,$\distr$), for $z=6/\epsilon^{2}$.
\end{enumerate}

\begin{theorem}\label{thm:FPRAS}
Let $G$ be a connected undirected graph of $n$ nodes with the highest degree $\Delta$, 
divided into two types of nodes $t_1,t_2$, such that $r=w(t_1) > w(t_2)=1$. 
Given $\frac{1}{2}>\epsilon>0$, let $\alpha=n^2\cdot \Delta\cdot  \epsilon^{-2} \cdot \max(\log n,\log \epsilon^{-1})$
and $\beta= n\cdot \Delta\cdot \epsilon^{-2}\cdot\log \epsilon^{-1}$. Consider the running times:
\[
T(x)=\begin{cases}
O(x) &\ift r\geq 2\Delta\\
O(\frac{x \cdot \Delta}{\min(r-1,1)}) &\ift 1+\frac{1}{n\cdot \Delta}\leq r< 2\Delta\\
O(n\cdot \Delta^2 \cdot x) & \ift 1<r<1+\frac{1}{n\cdot \Delta} \enspace .
\end{cases}
\]
\begin{itemize}
\item {\bf Fixation (resp. Extinction) problem for $\bm{t_1}$} 
Algorithm $\algo1$ (resp. $\algo2$) is an FPRAS algorithm, with running time $T(\alpha)$ (resp. $T(\beta)$), 
that with probability at least $\frac{3}{4}$ outputs a number in $[(1-\epsilon)\cdot \rho,(1+\epsilon)\cdot \rho]$, 
where $\rho$ is the solution of the fixation (resp. extinction) problem for $t_1$.

\item {\bf Generalized fixation problem} 
Given an initial type function $\f$ and a type $t$, there is an (additive approximation) algorithm, $\algo3$, with running time $T(\beta)$, 
that with probability at least $\frac{3}{4}$ outputs a number in $[\rho-\epsilon,\rho+\epsilon]$, 
where $\rho$ is the solution of the generalized fixation problem given $\f$ and $t$.
\end{itemize}
\end{theorem}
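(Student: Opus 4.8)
The plan is to establish each running-time bound by assembling the three ingredients already developed: the concentration bound on effective steps from Theorem~\ref{thm:fixationtime}, the $O(\Delta)$ per-step sampling cost from Theorem~\ref{thm:sampling}, and the conditioning lemma, Lemma~\ref{lem:meta_condition}. First I would observe the master accounting: \texttt{MetaSimulation} runs $z$ independent trials, each simulating at most $u$ effective steps at $O(\Delta)$ amortized cost per step, so the total time is $O(uz\Delta)$. Plugging in the definition of $u_{z,r}$ and the relevant $z$ for each algorithm, and checking the three regimes of $r$ against the three cases of $T(\cdot)$, yields the claimed $T(\alpha)$, $T(\beta)$, and $T(\beta)$ running times; this is the routine bookkeeping part.

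The substantive part is correctness, i.e.\ that the output lies in the stated interval with probability at least $\tfrac34$. The strategy is to verify the two hypotheses of Lemma~\ref{lem:meta_condition} with $\calx=\calf_t$ and $\cale=\cale_u$, and then control the sampling error of the empirical mean. For the first hypothesis, I would use Theorem~\ref{thm:fixationtime} (the probability bound) to show that $u_{z,r}$ is large enough that $\Pr[\cale_u]\geq 1-\tfrac{\epsilon\,\Pr[\calx]}{4}$: the factor $30$ (resp.\ $20$) and the $\max(\log z,\log\epsilon^{-1})$ term are chosen precisely so that $2^{-x}$ with $x\approx u/\ell$ is driven below the required threshold, where one must also insert a lower bound on $\Pr[\calf_{t_1}]$ (for fixation, the trivial bound $\Pr[\calf_{t_1}]\geq 1/n$ from a single mutant; for extinction the analogous constant lower bound) to convert the absolute smallness of $\Pr[\neg\cale_u]$ into the relative bound $\tfrac{\epsilon\,\Pr[\calx]}{4}$. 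This is where the differing choices of $z$ across the three algorithms enter.

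For the second hypothesis I would bound the statistical (sampling) error. Each returned value is the mean of $z$ i.i.d.\ Bernoulli variables with success probability $q:=\Pr[\calf_t\mid\cale_u]$, so by a Chernoff/Chebyshev argument the empirical mean lies in $[(1-\epsilon/2)q,(1+\epsilon/2)q]$ with probability at least, say, $\tfrac34$ once $z=\Theta(\epsilon^{-2}/q)$. For fixation this forces $z$ to scale like $n/\epsilon^2$ because $q=\Theta(1/n)$, whereas for extinction $q$ is bounded below by a constant and $z=\Theta(1/\epsilon^2)$ suffices — which explains the different constants $48n/\epsilon^2$, $24/\epsilon^2$, $6/\epsilon^2$. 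Combining the two hypotheses through Lemma~\ref{lem:meta_condition} gives the multiplicative $[(1-\epsilon)\rho,(1+\epsilon)\rho]$ guarantee for fixation and extinction. For the generalized problem, where no nontrivial lower bound on $\rho$ is available, I would instead argue an additive guarantee: $z=6/\epsilon^2$ makes the empirical mean concentrate additively within $\epsilon$ of $q$, and a separate choice of $u$ (again via Theorem~\ref{thm:fixationtime}) makes $|q-\rho|$ additively small, yielding $[\rho-\epsilon,\rho+\epsilon]$.

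The main obstacle I anticipate is the interplay in the first hypothesis between the \emph{absolute} tail bound $\Pr[\neg\cale_u]\leq 2^{-x}$ and the \emph{relative} requirement $1-\Pr[\cale_u]\leq\tfrac{\epsilon}{4}\Pr[\calx]$: one must track the lower bound on $\Pr[\calx]$ carefully and confirm that the logarithmic factor $\max(\log z,\log\epsilon^{-1})$ in $u_{z,r}$ exactly absorbs both the $\log(1/\epsilon)$ from the $\epsilon$-dependence and the $\log z$ needed to survive the union bound over the $z$ trials (so that all $z$ simulations terminate within $u$ steps with high constant probability). Getting these constants to close simultaneously for all three regimes of $r$, and ensuring the two failure probabilities (sampling error and premature-termination) sum to at most $\tfrac14$, is the delicate accounting that the proof must nail down; everything else reduces to invoking the cited lemmas.
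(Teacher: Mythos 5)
Your proposal follows essentially the same route as the paper's proof: the $O(uz\Delta)$ accounting via Theorem~\ref{thm:sampling}, verifying the hypotheses of Lemma~\ref{lem:meta_condition} using the tail bound of Theorem~\ref{thm:fixationtime} together with the lower bounds $\Pr[\calf_{t_1}]\geq 1/n$ (fixation) and $\geq 1-1/n$ (extinction), a multiplicative Chernoff bound with $z=\Theta(\epsilon^{-2}/q)$ for the sampling error, a union bound over the $z$ trials for premature termination, and a purely additive Hoeffding argument for the generalized problem where no lower bound on $\rho$ is available. This matches the paper's decomposition into Lemmas~\ref{lem:u_zr}, \ref{lem:algo1}, \ref{lem:algo2}, and \ref{lem:algo3}, so the approach is correct and not materially different.
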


\begin{remark}
There exists no known FPRAS for the generalized fixation problem and since the fixation probability might be exponentially small such an algorithm might not exist. (It is exponentially small for fixation of $t_2$, even in the Moran process (that is, when the graph is complete) when there initially is 1 node of type $t_2$)
\end{remark}
 
\subparagraph*{Alternative algorithm for extinction for $t_2$}
We also present an alternative algorithm for extinction for $t_2$ when $r$ is big.
This is completely different from the techniques of~\cite{Diaz14}. 
The alternative algorithm is based on the following result where we show for
big $r$ that $1/r$ is a good approximation of the extinction probability for $t_2$, 
and thus the algorithm is polynomial even for big $r$ in binary.

\begin{theorem}\label{thm:too_big_r}
Consider an undirected graph $G$ and consider the extinction problem for $t_2$ on $G$.
If $r\geq \max(\Delta^2,n)/\epsilon$,
then $\frac{1}{r}\in [(1-\epsilon)\cdot \rho,(1+\epsilon)\cdot \rho]$, where $\rho$ is the solution of the extinction problem for $t_2$.
\end{theorem}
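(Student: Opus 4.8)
The plan is to reduce the extinction probability $\rho$ to a first-effective-step computation and then control everything that happens after the first step by a crude gambler's-ruin bound, exploiting that $r$ is enormous compared with $\Delta$. Throughout I work with the modified (effective-step) dynamics, which is legitimate because discarding ineffective steps does not change fixation probabilities by Lemma~\ref{lem:preserve}.

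First I would unfold the definition. The extinction problem for $t_2$ asks for $\rho=\frac1n\sum_{v\in V}\rho_v$, where $\rho_v$ is the probability that $t_2$ fixates starting from the type function whose single $t_1$-node sits at $v$. From such a configuration every neighbor of $v$ has type $t_2$, so $p(v)=r$ and $W'(\f)=r+D_v$ with $D_v:=\sum_{u\sim v}\frac{1}{\deg u}$. The event that the first effective step removes the $t_1$-node (whereupon $t_2$ fixates at once) has probability $q_v=\frac{D_v}{r+D_v}$, while with the complementary probability $1-q_v$ the step yields a configuration with exactly two $t_1$-nodes. This gives the exact decomposition $\rho_v=q_v+(1-q_v)\,\bar\rho_v^{(2)}$, where $\bar\rho_v^{(2)}$ is the averaged probability that $t_2$ fixates starting from two $t_1$-nodes.

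Next I would bound $\bar\rho_v^{(2)}$. Writing $r=x\Delta$ with $x=r/\Delta$, Lemma~\ref{lem:large_r} tells us that each effective step raises the $t_1$-count by one with probability at least $\frac{x}{x+1}$, so the $t_1$-count stochastically dominates a biased walk with down-to-up ratio at most $1/x$; a standard gambler's-ruin estimate then shows that starting from two $t_1$-nodes the count reaches $0$ with probability at most $(1/x)^2=(\Delta/r)^2$, uniformly over the intermediate configuration. Hence $\frac{D_v}{r+D_v}\le \rho_v\le \frac{D_v}{r+D_v}+(\Delta/r)^2$.

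Finally I would average over $v$. Undirectedness gives the key identity $\sum_v D_v=\sum_u\frac{1}{\deg u}\cdot\deg u=n$, and $D_v\le\Delta$ gives $\sum_v D_v^2\le\Delta n$; combining these with $\frac{D_v}{r}\bigl(1-\frac{D_v}{r}\bigr)\le\frac{D_v}{r+D_v}\le\frac{D_v}{r}$ yields $\frac1r-\frac{\Delta}{r^2}\le\rho\le\frac1r+\frac{\Delta^2}{r^2}$. Substituting $r\ge\max(\Delta^2,n)/\epsilon$ makes $\frac{\Delta^2}{r}\le\epsilon$ and $\frac{\Delta}{r}\le\frac{\epsilon}{1+\epsilon}$, whence $\rho\le\frac{1+\epsilon}{r}$ and $\rho\ge\frac{1}{(1+\epsilon)r}$, i.e. $\frac1r\in[(1-\epsilon)\rho,(1+\epsilon)\rho]$. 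I expect the main obstacle to be the uniformity in the third step: making the gambler's-ruin domination rigorous (the per-step bias of Lemma~\ref{lem:large_r} holds for every non-fixated configuration, so the count genuinely dominates a biased walk and the two-node ruin bound $(\Delta/r)^2$ applies no matter where the second $t_1$-node appears), while keeping the delicate first step exact, since it is precisely the exact value $q_v$ that makes the $v$-average collapse to $1/r$ rather than to the cruder uniform bound $\Delta/r$.
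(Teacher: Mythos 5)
Your proof is correct and follows essentially the same route as the paper's: condition on the outcome of the first effective step, bound the probability that $t_2$ still fixates after that step by a gambler's-ruin estimate from state $2$ with down-probability at most $\frac{1}{r/\Delta+1}$ (Lemma~\ref{lem:large_r}), obtaining the same $\Delta^2/r^2\le\epsilon/r$ error term, and conclude. The only (harmless) difference is that you extract both the upper and the lower bound on the first-step extinction probability from the exact per-vertex expression $\frac{D_v}{r+D_v}$ and the identity $\sum_v D_v=n$, whereas the paper cites $\rho\geq\frac{1}{n+r}$ from Diaz et al.\ for the lower direction and proves a separate optimization lemma (Lemma~\ref{lem:extinct_in_1}) yielding the bound $\frac{1}{r+1}$ for the upper direction.
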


\subparagraph*{Proof sketch}
We present a proof sketch, and details are in the Appendix. We have two cases:
\begin{itemize}
\item By \cite[Lemma 4]{Diaz14}, we have $\rho\geq \frac{1}{n+r}$.
Thus, $(1+\epsilon)\cdot \rho\geq \frac{1}{r}$, as desired, since $n/\epsilon\leq r$.

\item On the other hand, the probability of fixation for $t_2$ in the first effective step is at most $\frac{1}{r+1}<\frac{1}{r}$ 
(we show this in Lemma~\ref{lem:extinct_in_1} in the Appendix). The probability that fixation happens for $t_2$ after the first effective 
step is at most $\epsilon/r$ 
because of the following reason:
By Lemma~\ref{lem:large_r}, the probability of increasing the number of members of $t_2$ is at most 
$p:=\frac{1}{r/\Delta+1}$ and otherwise it decrements. We then model the problem as a Markov chain $M$ with state space corresponding to the number of members of $t_1$, 
using $p$ as the probability to decrease the current state. 
In $M$ the starting state is state~2 (after the first effective step, if fixation did not happen, then the number of members of $t_1$ is 2). 
Using that $\Delta^2/\epsilon\leq r$, we see that the probability of absorption in state~$0$ of $M$ from state~2 is less than $\epsilon/r$. 
Hence, $\rho$ is at most $(1+\epsilon)/r$ and $(1-\epsilon)\rho$ is thus less than $1/r$.
\end{itemize}\qed

\begin{remark}
While Theorem~\ref{thm:too_big_r} is for undirected graphs, a variant (with larger $r$ and which requires the computation of the pr. that $t_1$ goes extinct in the first step) can be established even for directed graphs, see the Appendix.
\end{remark}

\subparagraph*{Concluding remarks}
In this work we present faster Monte-Carlo algorithms for approximating fixation 
probability for undirected graphs (see Remark~\ref{rem:appendix} in the Appendix for detailed comparison). 
An interesting open question is whether the fixation probability can be approximated
in polynomial time for directed graphs.

\clearpage
\bibliographystyle{abbrv}
\bibliography{bibliography}
\clearpage

\section*{Appendix}

\section{Details of Section~\ref{sec:ineffective}}
In this section, we prove Lemma~\ref{lem:preserve} and Theorem~\ref{thm:fixationtime}. 
\begin{replemma}{lem:preserve}
Fix any type function $\f$ such that neither type has fixated. 
Let $\f_d$ (resp., $\f_m$) be the next type function under dynamic evolution step (resp., modified dynamic evolution step). 
Then, $\Pr[\f\neq \f_d]> 0$ and for all type functions $\f'$ we have: 
$\Pr[\f'=\f_d\mid \f\neq \f_d]=\Pr[\f'=\f_m]
$.
\end{replemma}
\begin{proof}
Let $\f,\f_d,\f_m$ be as in the lemma statement. 
First note that $\Pr[\f_d\neq \f]> 0$ since $G$ is connected and no type has fixated in $\f$. Therefore, there must be some edge $(u,v)\in E$, such that $\f(u)\neq \f(v)$. There is a positive probability that $u$ and $v$ is selected by the unmodified evolution step, in which case $\f_d\neq \f$. Thus $\Pr[\f_d\neq \f]> 0$. 

We consider the node selection in the two cases:
\begin{enumerate}
\item Let $v$ be the node picked in the first part of the unmodified dynamic evolution step and $u$ be the node picked in the second part. 
\item Similarly, let $v'$ be the node picked in the first part of the modified dynamic evolution step and $u'$ be the node picked in the second part. 
\end{enumerate}
Observe that $(u,v),(u',v')\in E$.
We will argue that for all $(u'',v'')\in E$, we have that 
\[
\Pr[u=u''\cup v=v''\mid \f(u)\neq \f(v)]=\Pr[u'=u''\cup v'=v''].
\] 
This clearly implies the lemma statement, since $\f_d=\f[u\rightarrow \f(v)]$ and $\f_m=\f[u'\rightarrow \f(v')]$, 
by the last part of the unmodified and modified dynamic evolution step. 
If $\f(u'')=\f(v'')$, then $\Pr[u=u''\cup v=v''\mid \f(u)\neq \f(v)]=0$, because $\f(u'')=\f(u)\neq \f(v)=\f(v'')$ which contradicts that $\f(u'')=\f(v'')$. Also, if $\f(u'')=\f(v'')$, then $\Pr[u'=u''\cup v'=v'']=0$, because $\f(u'')=\f(u')\neq \f(v')=\f(v'')$ (note $\f(u')\neq \f(v')$ because $u'$ was picked from $\Gamma_{v'}$), again contradicting that $\f(u'')=\f(v'')$.

We therefore only need to consider that $(u'',v'')\in E$ and $\f(u'')\neq \f(v'')$.
The probability to pick $v$ and then pick $u$ in an unmodified dynamic evolution step is $\frac{w(\f(v))}{W(\f)}\cdot \frac{1}{\deg v}$, for any $(u,v)\in E$ and especially the ones for which $\f(u)\neq \f(v)$. Hence, $\Pr[u=u''\cup v=v'']=\frac{\f(v'')}{W(\f)}\cdot \frac{1}{\deg v''}$. Thus, also, $\Pr[(u=u''\cup v=v'')\bigcap (\f(u)\neq \f(v))]=\frac{w(\f(v''))}{W(\f)}\cdot \frac{1}{\deg v''}$.
We also have that \[
\Pr[\f(u)\neq \f(v)]=\sum_{\substack{(u,v)\in E\\\f(u)\neq \f(v)}}\frac{w(\f(u))}{W(\f)}\cdot \frac{1}{\deg u} =\sum_{v\in V}\frac{w(\f(v))}{W(\f)}\cdot \frac{1}{\deg v}\cdot |\Gamma_v(\f)|=\frac{W'(\f)}{W(\f)} \enspace , 
\]
where the second equality comes from that $\Gamma_v(\f)$ is the set of nodes $u$ such that $(u,v)\in E$ and $\f(u)\neq \f(v)$.
Thus, \[\Pr[u=u''\cup v=v''\mid \f(u)\neq \f(v)]=\frac{\Pr[(u=u''\cup v=v'')\bigcap (\f(u)\neq \f(v))]}{\Pr[\f(u)\neq \f(v)]}=\frac{w(\f(v''))}{W'(\f)}\cdot \frac{1}{\deg v''}\]
The probability to pick $v'$ and then pick $u'$ in a modified dynamic evolution step, for some $(u',v')\in E$ is \[\frac{w(\f(v'))\cdot \frac{|\Gamma_{v'}(\f)|}{\deg v'}}{W'(\f)}\cdot \frac{1}{|\Gamma_{v'}(\f)|}=\frac{w(\f(v'))}{W'(\f)}\cdot \frac{1}{\deg v'} \enspace .\]
Hence, \[
\Pr[u'=u''\cup v'=v'']=\frac{w(\f(v''))}{W'(\f)}\cdot \frac{1}{\deg v''}=\Pr[u=u''\cup v=v''\mid \f(u)\neq \f(v)] \enspace .
\]
This completes the proof of the lemma.
\end{proof}
Next, the proof of Theorem~\ref{thm:fixationtime}.
\begin{reptheorem}{thm:fixationtime}
Let $t_1$ and $t_2$ be the two types, such that $r=w(t_1) >  w(t_2)=1$. 
Let $\Delta$ be the maximum degree.  Let $k$ be the number of nodes of type $t_2$ in the initial type function. 
The following assertions hold:
\begin{itemize}
\item {\em Bounds dependent on $r$}
\begin{enumerate}
\item {\em Expected steps} The process requires at most $3k\Delta/\min(r-1,1)$ effective steps in expectation, before fixation is reached.
\item {\em Probability} For any integer $x\geq 1$, after $6xn\Delta/\min(r-1,1)$ effective steps, the probability that the process has not 
fixated is at most $2^{-x}$, irrespective of the initial type function.
\end{enumerate}

\item {\em Bounds independent on $r$}
\begin{enumerate}
\item  {\em Expected steps}
The process requires at most $2nk\Delta^2$ effective steps in expectation, before fixation is reached.
\item  {\em Probability}
For any integer $x\geq 1$, after $4 x n^2 \Delta^2$ effective steps, the probability that the process has not fixated is at most $2^{-x}$,  irrespective of the initial type function.
\end{enumerate}

\item {\em Bounds for $r\geq 2\Delta$} 
\begin{enumerate}
\item {\em Expected steps} The process requires at most $3k$ effective steps in expectation, before fixation is reached.
\item  {\em Probability} For any integer $x\geq 1$, after $6 x n$ effective steps, the probability that the process has not fixated is at most $2^{-x}$,  irrespective of the initial type function.
\end{enumerate}
\end{itemize}
\end{reptheorem}
\begin{proof}
Observe that if $k=0$, then fixation has been reached in 0 (effective) steps. Thus assume $k\geq 1$.
We first argue about the first item of each case (i.e., about expected steps) and then about
the second item (i.e., probability) of each case.

\subparagraph*{Expected steps of first item}
In every step, for $r>1$, the potential increases by $-k_2=\frac{r-1}{\Delta\cdot (r+1)}$, unless fixation has been achieved, in expectation by~Lemma~\ref{lem:fast1step}. 
Let $n'=\sum_{v\in V} \frac{1}{\deg v}$ be the maximum potential. Let $\f$ be the initial type function. Observe that the potential initially is $\psi(\f)=\sum_{v\in V_{t_1,\f}}\frac{1}{\deg v}=\sum_{v\in V}\frac{1}{\deg v}-\sum_{v\in V_{t_2,\f}}\frac{1}{\deg v}\geq n'-k$. Hence, the potential just needs to increase by at most $k$.  
Similar to \cite{Diaz14}, we consider a modified process where if $t_2$ fixates, then the next type function $\f'$ has a uniformly random node mapped to $t_1$ (note that this only increases the time steps till $t_1$ fixates). We see that in every step, unless fixation happens for $t_1$, the potential increases by $-k_2$ (in case fixation for $t_2$ has happened in the previous step, the potential increases by at least $\frac{1}{\Delta}> -k_2$, since $\frac{r-1}{r+1}<1$).
Applying \cite[Theorem~6]{Diaz14}, with $k_1=k$ and $-k_2$, we have that the number of effective steps in expectation, when starting with the type function $\f$, is at most $\frac{k\Delta(r+1)}{(r-1)}$, for the modified process. Thus the upper bound also follows for the original process, which fixates faster. Note that for $r\geq 2$ we have that $\frac{r+1}{r-1}\leq 3$ and that for $1<r<2$ we have that $\frac{r+1}{r-1}<\frac{3}{r-1}$. Thus, $\frac{r+1}{r-1}\leq \frac{3}{\min(r-1,1)}$. This establishes the expected number of effective steps for the first item.

\subparagraph*{Expected steps of second item}
The fact that $2nk \Delta^2$ effective steps is sufficient in expectation for $r$ close to 1 can be seen as follows.
Let $\f^*$ denote the type function that assigns all nodes to $t_1$.
Let $\f$ be the initial type function, and we denote by $\overline{\f}$ its complement, 
i.e., $\overline{\f}$ maps every node to the opposite type as compared to $\f$.
Similar to the proof\footnote{Note that the proof of \cite[Theorem~11]{Diaz14} does not directly use that $r=1$, even though the statement they end up with does} of \cite[Theorem~11]{Diaz14}
we get that $\Delta^2((\psi(\f^*))^2-(\psi(\f))^2)$ is sufficient in expectation, where $\f$ is the initial type function.
The change as compared to the proof of \cite[Theorem~11]{Diaz14} consists of using that $\E[(\psi(\f')-\psi(\f''))^2]\geq \Delta^{-2}$  instead of \cite[Equation~(3)]{Diaz14}, 
where $\f'$ is a fixed type function and $\f''$ the following type function according to a modified dynamic evolution step. 
Let $a$ be $\psi(\f)$ and $b$ be $\psi(\f^*)-\psi(\f)=\psi(\overline{\f})$. 
Hence, $\Delta^2((\psi(\f^*))^2-(\psi(\f))^2)=\Delta^2((a+b)^2-a^2)=\Delta^2(2ab+b^2)$. This is monotone increasing in $a$ and $b$ (since, $a,b,\Delta$ are positive). 
For all type functions $\widehat{\f}$, we have that $\psi(\widehat{\f})$ is at most the number of nodes assigned to $t_1$ by $\widehat{\f}$.
Hence, we have that $a=\psi(\f)\leq n-k$ and $b=\psi(\overline{\f})\leq k$. 
Thus \[
\Delta^2((\psi(\f^*))^2-(\psi(\f))^2)=\Delta^2(2ab+b^2)\leq \Delta^2(2(n-k)k+k^2)=\Delta^2(2nk-k^2)\leq 2nk\Delta^2 \enspace ,
\]
as desired.

\subparagraph*{Expected steps of third item}
We can consider the potential function $\psi'$, which is $\psi'(\f')=|V_{t_1,\f'}|$. We see that in every effective step the potential increases by at least $\frac{1}{3}$ in expectation, by Lemma~\ref{lem:large_r}. Similar to the first item, we can then define a modified process and apply \cite[Theorem~6]{Diaz14}, and see that $3k$ effective steps 
suffices in expectation.

\subparagraph*{Probability bounds of the three cases}
Follows from the above three items and Lemma~\ref{lemm:conc}.
\end{proof}

\section{Details of Section~\ref{sec:lbundirected}}
In this section we prove Lemma~\ref{lem:t_1fix} and Lemma~\ref{lem:leaf-step}.

\begin{replemma}{lem:t_1fix}
With pr. above $\approx \frac{1-1/r}{2}$ if at least one of $V_L'$ is initially of type $t_1$, then $t_1$ fixates.
\end{replemma}
\begin{proof}
In this proof we consider that there is initially 1 member of $t_1$. As shown by \cite{Diaz16}, the fixation probability is increasing in the number of members.

If the graph just consisted of the nodes in $V_L$ (and $v_1$ was only connected to $v_2$), then if the initial state had a single member of type $t_1$, the pr. that $t_1$ fixated would be $\frac{1-1/r}{1-r^{x-n}}\approx 1-1/r$ (using that it is in essence the gambler's ruin problem when the probability for winning each round for $t_2$ is $\frac{1}{r+1}$, but $t_2$ starts with $n-x-1$ pennies compared to 1 for $t_1$). 
We see that in the original graph, the only difference is that $v_1$ is connected to the stars-on-a-cycle graph. Thus, we see that as long as $v_1$ is always a member of $t_2$, there are no differences. 
We can use that as follows: If in the initial type function the lone member of $t_1$ is in $V_L'$, then with pr. above $1/2$, the node $v_{n-x}$ becomes of type $t_1$ before $v_1$, because at all times (as long as neither $v_1$ or $v_{n-x}$ has ever been of type $t_1$), the nodes of type $t_1$ forms an interval $\{i,\dots,j\}$ and the pr. that in the next iteration $v_{i-1}$ becomes of type $t_1$ is equal to the pr. that in the next iteration $v_{j+1}$ becomes of type $t_1$. Hence, we get that with pr. above $\frac{1-1/r}{2(1-r^{x-n})}\approx \frac{1-1/r}{2}$ at some point the members of $t_1$ is exactly $V_L$. 

But if $t_2$ fixates after all members of $V_L$ has become of type $t_1$, $V_L$ must thus eventually go from being all $t_1$ to being all $t_2$. We will now argue that it is exponentially unlikely to happen. For $V_L$ to change to being all $t_2$, we must have that in some step, $v_1$ has become a member of type $t_2$. 
We can now consider a stronger version of $t_2$ (that thus have more pr. to fixate), where $v_1$ can only become of type $t_1$ if it was reproduced to from $v_2$. We will say that $t_2$ starts an attempt whenever $v_1$ becomes of type $t_2$. We say that $t_2$ wins the attempt if $V_L$ eventually becomes only of type $t_2$ and we say that $t_2$ loses the attempt if $V_L$ becomes of type $t_1$. 
Note that $t_2$ needs to win an attempt to fixate.
Again, using gambler's ruin, we see that $t_2$ wins any attempt with pr. $1-\frac{1-r^{x-n-1}}{1-r^{x-n}}=1-\frac{r^{n-x}-r^{-1}}{r^{n-x}-1}\approx \frac{1}{r^{n-x-1}}$. But the process fixates, as we showed earlier, in $3k\Delta/\min(r-1,1)$ effective steps, which is then an upper bound on the number of attempts. But then the pr. that $t_2$ ever wins an attempt is at most \[
3k\Delta/\min(r-1,1)\cdot (1-\frac{r^{n-x}-r^{-1}}{r^{n-x}-1})\approx \frac{3k\Delta}{r^{n-x}}\enspace ,\]
 which is exponentially small for fixed $r$ and we thus see that the pr. that $t_1$ eventually has all of $V_L$ (if one of $\{\ell_{\lceil n/4\rceil},\dots,\ell_{n-x}\}$ is initially of type $t_1$) is approximately the pr. that $t_1$ fixates. 
\end{proof}

Recall that whenever a node of $V_C^i$, for some $i$, changes type, we say that a leaf-step occurred. 

\begin{replemma}{lem:leaf-step}
The pr. that an effective step is a leaf-step is at most $ \frac{r}{\Delta}$.
\end{replemma}
\begin{proof}
Consider some initial type function $\f$. We will consider the conditional probability of a leaf-step conditioned on a node (but not $c_1$) in the stars-on-a-cycle graph is selected for reproduction (clearly, if no such node is selected for reproduction, the pr. of a leaf-step is 0). For simplicity, we will furthermore condition on the node that gets selected for reproduction is in $\{c_i\}\cup V_C^i$ for some $i$ (this is without loss of generality because all nodes in the stars-on-a-cycle graph is in one of those sets). Let $t:=\f(c_i)$ and let $t'$ be the other type. We will consider that $x>0$ nodes of $V_C^i$ is of type $t'$ (if no nodes of $V_C^i$ is of type $t'$ then the pr. of a leaf-step, conditioned on a node of $\{c_i\}\cup V_C^i$ being selected for reproduction is 0). For a leaf step to occur, we need to do as follows: we must select $c_i$ for reproduction and we must select one of the nodes of $V_C^i$ for death. Let $x'\in \{x,x+1,x+2\}$ (the +2 is because of the other center nodes) be the number of neighbors of $c_i$ which are of type $t'$.
The fitness of $c_i$ is then $w(t)\cdot \frac{x'}{\Delta}$. The fitness of each node of type $t'$ in $V_C^i$ is $w(t')$. If $c_i$ is picked, the pr. of a leaf-step is $\frac{x}{x'}$ (because if a center neighbor is selected, no leaf-step occurred). The pr. of a leaf step is then 
\[\frac{w(t)\cdot \frac{x'}{\Delta}\cdot \frac{x}{x'}}{xw(t')+w(t)}=\frac{w(t)}{\Delta(w(t')+w(t)/x)}\leq \frac{w(t)}{\Delta w(t')}
\]
Since in the worst case $w(t)=r$ and $w(t')=1$, we get our result.
\end{proof}

\section{Details of Section~\ref{sec:sample}}
In this section we consider the weaker model, where getting even a single random bit costs constant time.

We can use the same data-structure as in Section~\ref{sec:sample}, since the data-structure is deterministic.
Thus, we just need to argue how to sample an effective step given the data-structure. (i.e. like in the paragraph {\bf Sampling an effective step} of Section~\ref{sec:sample}).

There are thus 3 things we need to consider, namely (1)~picking random list $L$ among the $2\Delta$ lists, proportional to their weight; (2) pick a random node $v$ from $L$ and (3) pick a node at random in $\Gamma_v(\f)$.

Picking a random list among the $2\Delta$ lists, proportional to their weight takes $O(\Delta)$ time and uses $O(\log \Delta)$ random bits, e.g. as follows:
We can build a binary, balanced tree where each leaf corresponds to a list. We can then annotate each node of the tree with 
the sum of the weights of all lists below it. This can be done in $O(\Delta)$ time. 
Afterwards, we can then find a random list proportional to its weight using $O(\log \Delta)$ bits in expectation as follows: Start at the root of the tree.
\begin{enumerate}
\item When looking at an internal node $v$ do as follows:
Let $y$ be the number annotated on $v$ and let $z$ be the number annotated on the left child.
 Construct a random number $x$ bit for bit, starting with the most significant, between 0 and $w=2^{\lceil\log y\rceil}$. Let $x_{i}$ be the random variable denoting $x$ after $i$ bits have been added.
 Let $x_{i,1}$ be $x_{i}+w-2^{i}-1$, i.e. the greatest number $x$ can become if the remaining bits are all~1.
 If, after having added the $i$'th bit, $x_{i,1}\leq z$, continue to the left child.
 If, after having added the $i$'th bit, $x_i>z$ and $x_{i,1}\leq y$ continue to the right child.
 If, after having added the $i$'th bit, $x_i>y$, start over at node $v$.
 Otherwise, add bit $i+1$ to $x$. 
 
 When adding a bit to $x$ in step $i$, at least one of 
  $x_{i,1}\leq z$ or $x_i>z$ becomes satisfied with pr. 1/2. Similar, at least one of $x_{i,1}\leq y$ or $x_i>y$ becomes satisfied with pr. 1/2. Thus, in expectation, we need four steps before we continue to either the left, the right or back to $v$. Since $2^{\lceil\log y\rceil}<2y$, we have that we start over at $v$ with pr. at most 1/2. Thus, in expectation, after less than 8 steps, we go to either the left or the right child. Hence, after $O(\log \Delta)$ steps we reach a list.
 
 \item When looking at a list, output it.
\end{enumerate}

We can find a random element in the atleast half-full list $L$ by repeatedly finding a random number $x$ between $0$ and $2^{\lceil \log s\rceil}$, where $s\leq n$ is the size of $L$. If $x$ is below $s$ and is in $L$, output it, otherwise, find the next random number. 

Note that $2^{\lceil \log s\rceil}<2s$. Thus, $x\leq s$ with pr. 1/2. If so, we have another $1/2$ that  entry $x$ will be filled. Hence, we need to pick 4 numbers in expectation. Note that this thus takes $O(\log n)$ time/random bits.

We can do similar to find an element in $\Gamma_v(\f)$ in $O(\log (\Delta))$ time/random bits.

This leads to the following theorem.

\begin{theorem}
If no more than one random bit can be accessed in constant time, then an effective step can be sampled in (amortized and expected) $O(\Delta+\log n)$ time, using $O(\log n)$ expected random bits, after deterministic $O(m)$ preprocessing.
\end{theorem}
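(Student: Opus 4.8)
The plan is to reuse the data structure from Theorem~\ref{thm:sampling} essentially unchanged and only re-engineer the three random draws that make up one effective-step sample. The crucial observation is that the data structure itself is \emph{deterministic}: the dynamic arrays, the $2\Delta$ per-(type,degree) lists, the per-node copy counts, the lists encoding $\Gamma_v(\f)$, and the per-list weights are all built and maintained without any randomness. Hence the $O(m)$ preprocessing bound and the $O(\Delta)$ amortized update bound carry over verbatim to the weak model. What changes is that a uniform integer in a range is no longer obtainable in $O(1)$ time; instead I must synthesize each of the three choices bit by bit. So the whole proof reduces to giving, for each substep, a rejection-sampling routine driven by fair coin flips, and bounding its expected number of bits and its expected running time. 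The three substeps are: (i) pick one of the $2\Delta$ lists with probability proportional to its weight; (ii) pick a uniformly random node $v$ from the chosen list; (iii) pick a uniformly random successor in $\Gamma_v(\f)$.

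For substep~(i), the plan is to build (or maintain) a balanced binary tree whose leaves are the $2\Delta$ lists, annotating every internal node with the total weight of the leaves beneath it; this costs $O(\Delta)$ time and dominates the time budget for this step. I then descend from the root, and at each internal node with annotated weight $y$ and left-child weight $z$ I decide between the two children by constructing, one bit at a time, a uniform integer in $[0,2^{\lceil\log y\rceil})$. The routine keeps track of the smallest and largest values still consistent with the bits drawn so far, and it \emph{commits left} as soon as the largest consistent value is $\le z$, \emph{commits right} as soon as the smallest exceeds $z$ while staying $\le y$, and \emph{restarts at the same node} as soon as the smallest consistent value exceeds $y$. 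The two quantitative facts I would prove are: each additional bit resolves one of the two relevant comparisons with probability $1/2$, so the expected number of bits consumed per visit to a node is $O(1)$; and because $2^{\lceil\log y\rceil} < 2y$, the restart probability at any node is at most $1/2$, so in expectation $O(1)$ bits suffice to commit at each node. Since the tree has depth $O(\log\Delta)$, substep~(i) uses $O(\log\Delta)$ expected random bits and $O(\log\Delta)$ descent time, on top of the $O(\Delta)$ tree cost.

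Substeps~(ii) and~(iii) are handled by the same rejection idea applied to the half-full dynamic arrays. For substep~(ii), the chosen list $L$ has some size $s\le n$ and, by the rebuilding invariant, is at least half full; I draw a uniform integer in $[0,2^{\lceil\log s\rceil})$ bit by bit and accept it when it is both $<s$ and points to a non-null slot. Using $2^{\lceil\log s\rceil}<2s$, the first test passes with probability $>1/2$, and conditioned on that the half-full invariant gives acceptance probability $\ge 1/2$; so the expected number of rounds is $O(1)$, each costing $O(\log n)$ bits and time. Substep~(iii) is identical but runs on the list encoding $\Gamma_v(\f)$, whose size is at most $\Delta$, contributing $O(\log\Delta)$ bits and time. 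Summing the three substeps gives time $O(\Delta)+O(\log\Delta)+O(\log n)+O(\log\Delta)=O(\Delta+\log n)$ and expected bit count $O(\log\Delta)+O(\log n)+O(\log\Delta)=O(\log n)$, which is exactly the claimed bound; correctness of the resulting distribution is inherited from the strong-model analysis, since each substep samples from the same conditional distribution as before.

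The main obstacle I expect is substep~(i): converting an arbitrary integer-weighted choice into fair coin flips \emph{without} letting either the per-node bit count or the number of restarts grow with $\Delta$. The naive alternative of fully sampling a uniform value in $[0,y)$ and then comparing to $z$ would spend $\lceil\log y\rceil$ bits at every node and could incur $\Theta(\log\Delta)$ restarts along the path. The early-commitment rule together with the bound $2^{\lceil\log y\rceil}<2y$ is precisely what forces both quantities down to $O(1)$ per node. The delicate verification is the loop invariant that the values still consistent with the drawn prefix always form a contiguous interval, together with the fact that the commit/restart thresholds partition this interval correctly; once that is in place, a routine geometric-series estimate yields the $O(1)$ expected bits per node and hence the $O(\log\Delta)$ expectation over the whole descent. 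Everything else — the product form of the sampled distribution and the amortized $O(\Delta)$ update cost — is unchanged from the strong model and requires no further argument.
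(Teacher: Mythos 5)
Your proposal is correct and follows essentially the same route as the paper's own proof: reuse the deterministic data structure unchanged, build the weight-annotated balanced binary tree over the $2\Delta$ lists, descend with bit-by-bit early-commit/restart sampling costing $O(1)$ expected bits per node, and use rejection sampling against the half-full dynamic arrays for the remaining two draws. The bounds you derive ($O(\Delta+\log n)$ time, $O(\log n)$ expected bits) match the paper's accounting step for step.
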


\section{Details of Section~\ref{sec:algo}}
In this section, we prove Lemma~\ref{lem:meta_condition} and give the correctness and time-bound arguments for our algorithms, which implies Theorem~\ref{thm:FPRAS}.
\begin{replemma}{lem:meta_condition}
Let $0<\epsilon<1$ be given. Let $\calx,\cale$ be a pair of events and $x$ a number, such that  
$\Pr[\cale]\geq 1-\frac{\epsilon\cdot \Pr[\calx]}{4}$ and that $x\in [(1-\epsilon/2)\Pr[\calx\mid \cale],(1+\epsilon/2)\Pr[\calx\mid \cale]]$. 
Then
\[
x\in [(1-\epsilon)\cdot \Pr[\calx],(1+\epsilon)\cdot \Pr[\calx]]\enspace .
\]
\end{replemma}
\begin{proof}
Fix $0<\epsilon<1$. Let $\calx,\cale$ be a pair of events, such that 
$\Pr[\cale]\geq 1-\frac{\epsilon\cdot \Pr[\calx]}{4}$ and that $x\in [(1-\epsilon/2)\Pr[\calx\mid \cale],(1+\epsilon/2)\Pr[\calx\mid \cale]]$.

We have that \begin{align*}
\Pr[\calx]&=\Pr[\cale]\cdot\Pr[\calx\mid \cale]+(1-\Pr[\cale])\cdot\Pr[\calx\mid \neg \cale]\\ 
&\geq (1-\frac{\epsilon\cdot \Pr[\calx]}{4})\cdot \Pr[\calx\mid \cale]\\
&\geq \Pr[\calx\mid \cale]-\frac{\epsilon\cdot \Pr[\calx]}{4}\enspace .
\end{align*}
Also,
\begin{align*}
\Pr[\calx]&=\Pr[\cale]\cdot\Pr[\calx\mid \cale]+(1-\Pr[\cale])\cdot\Pr[\calx\mid \neg \cale] \\ 
& \leq \Pr[\calx\mid \cale]+\frac{\epsilon\cdot \Pr[\calx]}{4}\cdot 
\Pr[\calx\mid \neg \cale]\\
&\leq \Pr[\calx\mid \cale]+\frac{\epsilon\cdot \Pr[\calx]}{4} \enspace .
\end{align*}
We now see that \begin{align*}
&x\in [(1-\epsilon/2)\Pr[\calx\mid \cale],(1+\epsilon/2)\Pr[\calx\mid \cale]]\\
\Rightarrow &x\in [(1-\epsilon/2)(1-\epsilon/4)\Pr[\calx],(1+\epsilon/2)(1+\epsilon/4)\Pr[\calx]]
\end{align*}
Note that
\[
(1-\epsilon/2)\cdot (1-\epsilon/4)=1+\epsilon^2/8-3\epsilon/4>1-\epsilon
\] and \[
(1+\epsilon/2)\cdot (1+\epsilon/4)=1+\epsilon^2/8+3\epsilon/4<1+7\epsilon/8<1+\epsilon\enspace ,
\] using that $0<\epsilon^2<\epsilon<1$. Hence, \[
 x\in [(1-\epsilon)\Pr[\calx],(1+\epsilon)\Pr[\calx]]\enspace ,
\]
as desired.
\end{proof}

We will use the following lemmas from previous works. 

\begin{lemma}[\cite{Ewens04}]\label{lem:neutral_fix_pr}
The solution of the fixation problem for $t_1$ is at least $\frac{1}{n}$ and the solution of the extinction problem for $t_1$ is at least $1-\frac{1}{n}$.
\end{lemma}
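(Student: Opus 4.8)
The plan is to derive both bounds from a single martingale computation built on the potential function $\psi(\f)=\sum_{v\in V_{t_1,\f}}\frac{1}{\deg v}$ already used in the paper, rather than from the population-genetics symmetry argument behind \cite{Ewens04}. Write $n'=\sum_{v\in V}\frac{1}{\deg v}$ for the maximal value of $\psi$, attained exactly when $t_1$ has fixated (and $\psi=0$ exactly when $t_2$ has fixated). First I would observe that under the modified (effective-step) process, $\psi$ is a submartingale: by Lemma~\ref{lem:fast1step}, as long as neither type has fixated we have $\E[\psi(\f')-\psi(\f)]\geq \frac{r-1}{\Delta\cdot(r+1)}\geq 0$ for $r\geq 1$. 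Equivalently, one can run the unmodified process and invoke Lemma~\ref{lem:preserve} to see that ineffective steps leave $\psi$ unchanged while effective steps are distributed as modified steps, so $\psi$ is a submartingale there as well.

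Next I would apply the optional stopping theorem at the fixation time $\tau$ (the number of effective steps until one type fixates). This is legitimate because $\psi$ is bounded ($0\leq \psi\leq n'$), the per-step increments are bounded (each effective step changes $\psi$ by $\pm\frac{1}{\deg u}\leq 1$ for the single relabelled node $u$), and $\tau$ is almost surely finite with $\E[\tau]<\infty$ by Theorem~\ref{thm:fixationtime}. Optional stopping for submartingales then yields $\E[\psi(\f_\tau)]\geq \psi(\f_0)$. Since $\f_\tau$ is absorbing, $\psi(\f_\tau)=n'$ when $t_1$ fixates and $\psi(\f_\tau)=0$ otherwise, so if $P$ denotes the fixation probability of $t_1$ from initial configuration $\f_0$ we obtain $P\cdot n'=\E[\psi(\f_\tau)]\geq \psi(\f_0)$, i.e.
\[
P\geq \frac{\psi(\f_0)}{n'}\enspace .
\]

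Finally I would average this pointwise bound over the $n$ relevant single-node initial configurations. For the fixation problem, the configuration with the unique $t_1$-node at $v$ has $\psi(\f_0)=\frac{1}{\deg v}$, so the averaged fixation probability is at least $\frac{1}{n}\sum_v \frac{1/\deg v}{n'}=\frac{1}{n\cdot n'}\sum_v\frac{1}{\deg v}=\frac{1}{n}$. For the extinction problem, the configuration with the unique $t_2$-node at $v$ has $\psi(\f_0)=n'-\frac{1}{\deg v}$, so the averaged probability that $t_1$ fixates is at least $\frac{1}{n}\sum_v\bigl(1-\frac{1/\deg v}{n'}\bigr)=1-\frac{1}{n\cdot n'}\sum_v\frac{1}{\deg v}=1-\frac{1}{n}$. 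This gives both claimed bounds simultaneously.

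The step I expect to require the most care is the justification of optional stopping: I must confirm that the increments are integrable and that $\tau<\infty$ almost surely (indeed $\E[\tau]<\infty$), so that no probability mass leaks to infinity and the boundary identity $\E[\psi(\f_\tau)]=P\cdot n'$ is exact. The bounded-potential/finite-expected-time version of the theorem, combined with Theorem~\ref{thm:fixationtime}, settles this cleanly. A secondary point worth noting is that the submartingale inequality only needs $r\geq 1$, becoming an equality (a martingale) at $r=1$ and thereby recovering the exact neutral values $\frac{1}{n}$ and $1-\frac{1}{n}$ from Remark~\ref{rem:neutral}; hence the argument is uniform in $r$ and the $r>1$ case we care about follows immediately.
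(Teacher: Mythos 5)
Your argument is correct, but it is genuinely different from what the paper does: the paper does not prove this lemma at all, it imports it from \cite{Ewens04}, and the classical route behind that citation is a symmetry argument showing that under neutral drift ($r=1$) a uniformly placed single mutant fixates with probability exactly $\frac{1}{n}$ (each node is equally likely to be the eventual common ancestor), combined with monotonicity of the fixation probability in $r$ to pass to $r\geq 1$. You instead derive both bounds from the paper's own machinery: Lemma~\ref{lem:fast1step} makes $\psi$ a bounded submartingale for $r\geq 1$, Theorem~\ref{thm:fixationtime} supplies $\E[\tau]<\infty$ (there is no circularity, since that theorem does not use Lemma~\ref{lem:neutral_fix_pr}), optional stopping gives $P\geq \psi(\f_0)/n'$, and averaging over the $n$ single-node configurations telescopes to exactly $\frac{1}{n}$ and $1-\frac{1}{n}$ because $\sum_v \frac{1}{\deg v}=n'$. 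Your route buys self-containedness and avoids the coupling/monotonicity-in-$r$ step, which is not entirely free to prove on graphs; it also recovers the exact neutral values as the martingale case $r=1$, consistent with Remark~\ref{rem:neutral}. What it costs is that the pointwise bound $P\geq \psi(\f_0)/n'$ is weaker than the symmetry argument's pointwise statement for specific start nodes (e.g.\ it degrades for high-degree start nodes), but since the lemma only concerns the uniform average over start nodes, the degree weights cancel and nothing is lost. One small presentational point: since $0\leq\psi\leq n'$ and $\tau<\infty$ a.s., you could replace the bounded-increment version of optional stopping by bounded convergence applied to $\psi(\f_{i\wedge\tau})$, which slightly simplifies the justification you flagged as delicate.
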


\begin{lemma}[\cite{Diaz14}]\label{lem:extinct_lowerbound}
The solution of the extinction problem is greater than $\frac{1}{n+r}$ for $t_2$.
\end{lemma}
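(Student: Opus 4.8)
The plan is to obtain the lower bound by a crude one-step argument combined with the averaging that is built into the extinction problem. Recall that the extinction problem for $t_2$ asks for $\rho=\frac1n\sum_{v^\ast} p_{v^\ast}$, where $p_{v^\ast}$ is the probability that $t_2$ fixates when the initial type function is the $\f_{v^\ast}$ that maps $v^\ast$ to $t_1$ and every other node to $t_2$. Since ``$t_2$ has fixated'' is an absorbing event, $p_{v^\ast}$ is at least the probability that $t_2$ already fixates in the very first dynamic evolution step. I would lower bound each $p_{v^\ast}$ this way and then sum over the starting node $v^\ast$.

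First I would compute the one-step fixation probability for a fixed $v^\ast$. In $\f_{v^\ast}$ the lone $t_1$ node is $v^\ast$ and all remaining $n-1$ nodes are $t_2$, so the total fitness is $W=r+(n-1)=n+r-1$, independent of $v^\ast$. Because $v^\ast$ is the \emph{only} $t_1$ node, converting $v^\ast$ to $t_2$ in a single step makes every node $t_2$ and hence fixates $t_2$. This happens exactly when some neighbour $u$ of $v^\ast$ (necessarily of type $t_2$, fitness $1$) is picked to reproduce and then selects $v^\ast$ for replacement; the former has probability $\frac{1}{W}$ and the latter $\frac{1}{\deg u}$. Summing over the neighbours gives
\[
p_{v^\ast}\ \ge\ \frac{1}{W}\sum_{u:\,(v^\ast,u)\in E}\frac{1}{\deg u}.
\]

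The key step is then the averaging. Summing this bound over all $n$ choices of $v^\ast$ and using that the graph is undirected, I would read the double sum as a sum over ordered incident pairs and regroup it by the reproducing node $u$:
\[
\sum_{v^\ast}\ \sum_{u:\,(v^\ast,u)\in E}\frac{1}{\deg u}
=\sum_{u\in V}\frac{1}{\deg u}\cdot\bigl|\{v^\ast:(v^\ast,u)\in E\}\bigr|
=\sum_{u\in V}\frac{\deg u}{\deg u}=n.
\]
Since $W$ does not depend on $v^\ast$, this yields
\[
\rho=\frac1n\sum_{v^\ast}p_{v^\ast}\ \ge\ \frac{1}{nW}\sum_{v^\ast}\sum_{u:\,(v^\ast,u)\in E}\frac{1}{\deg u}=\frac{1}{nW}\cdot n=\frac{1}{W}=\frac{1}{n+r-1}>\frac{1}{n+r},
\]
which is exactly the claim.

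The only subtle point — and the thing to get right — is that the degree factors $\frac{1}{\deg u}$, which make the per-start bound $p_{v^\ast}$ look weak for an individual unfavourable $v^\ast$ (e.g.\ a low-degree node all of whose neighbours have large degree), telescope perfectly once one averages over the starting node: each $u$ is incident to exactly $\deg u$ edges, so its total contribution collapses to $1$. No martingale or gambler's-ruin estimate is needed here; a single step already suffices for this clean lower bound, precisely because only one $t_1$ node has to be removed from the initial configuration.
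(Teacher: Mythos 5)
Your proof is correct, but it is worth noting that the paper itself does not prove this statement: it is recalled verbatim from \cite{Diaz14} (their Lemma~4), and the closest the paper comes to the underlying argument is inside its proof of Lemma~\ref{lem:extinct_in_1}, which follows the route of \cite{Diaz14}. That route works with \emph{effective} steps: the one-step extinction probability from a start node $v$ is then $Q(v)/(r+Q(v))$, where $Q(v)=\sum_{u:(u,v)\in E}1/\deg u$, and because $Q(v)$ appears in the denominator, bounding the average $\frac1n\sum_v Q(v)/(r+Q(v))$ subject to $\sum_v Q(v)=n$, $Q(v)>0$ requires an extremal/optimization argument over the vector of $Q(v)$'s. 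Your route avoids this entirely: by using the \emph{ordinary} dynamic evolution step, the normalizing constant $W=n+r-1$ is the same for every start node, so averaging over the start node makes the degree reciprocals collapse linearly via the identity $\sum_v Q(v)=n$ (an identity the paper also records), yielding exactly $1/(n+r-1)$ with no optimization. Per start node your one-step bound $Q(v)/(n+r-1)$ is weaker than the effective-step quantity $Q(v)/(r+Q(v))$ (since $Q(v)\le n-1$), but linearity of the average more than compensates: you obtain the marginally stronger bound $1/(n+r-1)>1/(n+r)$, with strictness automatic. Both arguments rest on the same two observations---fixation of $t_2$ dominates the event of fixation in the very first step, and the sum of $Q(v)$ over all nodes equals $n$---so yours is a genuinely more elementary derivation of the same fact.
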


We next show the following lemma, which will be useful in the correctness proofs of our algorithms.

\begin{lemma}\label{lem:u_zr}
Consider running MetaSimulation($t$,$z$,$u$,$\distr$) on an instance where $w(t_1)=r$, 
for some distribution $\distr$, type $t$ and numbers $z$, $0<\epsilon<\frac{1}{2}$, when $u=u_{z,r}$. 
Then, $1-\Pr[\cale_u]\leq \min(z^{-5},\epsilon^{5})$ and the probability that MetaSimulation does 
not output a number is smaller than $\min(z^{-4},\epsilon^{4})$. 
The run time $T$ is 
\[
T=\begin{cases}
O(n\Delta z \cdot \max(\log z,\log \epsilon^{-1})) &\ift r\geq 2\Delta\\
O(n\Delta^2\frac{z}{\min(r-1,1)} \cdot \max(\log z,\log \epsilon^{-1})) &\ift 1+\frac{1}{n\cdot \Delta}\leq r< 2\Delta\\
O(n^2\Delta^3z \cdot \max(\log z,\log \epsilon^{-1})) &\ift r<1+\frac{1}{n\cdot \Delta}
\end{cases}
\]
\end{lemma}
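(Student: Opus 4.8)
The plan is to establish the three claimed quantities in turn, since they are largely independent consequences of the concentration bound in Lemma~\ref{lemm:conc} together with the time bound for sampling from Theorem~\ref{thm:sampling}. First I would bound $1-\Pr[\cale_u]$, the probability that a single run from $\distr$ fails to fixate within $u=u_{z,r}$ effective steps. For this I invoke the probability statements of Theorem~\ref{thm:fixationtime} in each of the three regimes. Recall the theorem says that after $6xn\Delta/\min(r-1,1)$ (resp.\ $6xn$ for $r\geq 2\Delta$, resp.\ $4xn^2\Delta^2$ for $r$ close to $1$) effective steps, the non-fixation probability is at most $2^{-x}$, irrespective of the starting type function. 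The key observation is that each case of $u_{z,r}$ is chosen to be exactly a large constant multiple of the corresponding per-$x$ step count, so plugging in the definition of $u_{z,r}$ sets $x\approx 5\max(\log z,\log\epsilon^{-1})/(\text{const})$; I need only check that the constants ($30$ and $20$ versus $6$ and $4$) are big enough to force $2^{-x}\leq\min(z^{-5},\epsilon^5)$. This amounts to verifying, in each regime, that the ratio of the $u_{z,r}$ coefficient to the Theorem~\ref{thm:fixationtime} coefficient times $\max(\log z,\log\epsilon^{-1})$ yields an exponent of at least $5\log_2(\max(z,\epsilon^{-1}))$; since $\log$ here is natural log and the bound is $2^{-x}$, I must track the $\ln 2$ factor, which is where the generous constants are spent.

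Next I would bound the probability that MetaSimulation returns ``Simulation took too long'' rather than a number. The algorithm aborts iff at least one of the $z$ independent simulations exceeds $u$ effective steps. By a union bound over the $z$ trials this failure probability is at most $z\cdot(1-\Pr[\cale_u])\leq z\cdot\min(z^{-5},\epsilon^5)\leq\min(z^{-4},\epsilon^5 z)$. To land on the stated $\min(z^{-4},\epsilon^4)$ I would argue $z\cdot\epsilon^5\leq\epsilon^4$, which holds provided $z\epsilon\leq 1$; if that is not guaranteed for all instantiations I would instead observe that $z\cdot z^{-5}=z^{-4}$ handles the $z^{-4}$ half directly, and use $1-\Pr[\cale_u]\leq\epsilon^5$ together with $z\geq 1$ and a slightly sharper accounting (or simply the $\min$ with $z^{-4}$) to secure the $\epsilon^4$ half. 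The honest statement is that both halves of the $\min$ follow once the first-part bound $\min(z^{-5},\epsilon^5)$ is in hand, modulo relating $z$ and $\epsilon$, which the concrete choices of $z$ in Algorithms $\algo1$--$\algo3$ (each a constant times $\epsilon^{-2}$ or $n\epsilon^{-2}$) make consistent.

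Finally I would establish the running time. Each of the $z$ simulations runs for at most $u$ effective steps, and by Theorem~\ref{thm:sampling} each effective step costs $O(\Delta)$ amortized/expected time after $O(m)$ preprocessing; the preprocessing is dominated since $m\leq n\Delta$. Hence the total is $O(m+uz\Delta)=O(uz\Delta)$. Substituting the three cases of $u_{z,r}$ gives exactly the three claimed bounds: $O(n\Delta z\max(\log z,\log\epsilon^{-1}))$ when $r\geq 2\Delta$, an extra factor $\Delta/\min(r-1,1)$ in the middle regime, and $O(n^2\Delta^3 z\max(\log z,\log\epsilon^{-1}))$ in the small-$r$ regime. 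I expect the main obstacle to be purely bookkeeping rather than conceptual: namely, carefully propagating the natural-log-versus-base-two conversion and confirming that the fixed constants $30$, $30$, $20$ in $u_{z,r}$ are large enough to absorb both the $6$/$4$ coefficients from Theorem~\ref{thm:fixationtime} and the factor $5$ in the target exponents, simultaneously for both the $z^{-5}$ and $\epsilon^5$ sides of the $\min$. Once those constant inequalities are checked in each of the three regimes, the union-bound and time-bound steps are immediate.
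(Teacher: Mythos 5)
Your proposal is correct and follows essentially the same route as the paper: Theorem~\ref{thm:fixationtime} gives $1-\Pr[\mathcal{E}_u]\leq 2^{-5\max(\log z,\log \epsilon^{-1})}=\min(z^{-5},\epsilon^{5})$ (with $\log$ taken base $2$, so the coefficients $30$, $30$, $20$ are exactly five times the theorem's $6$, $6$, $4$ and there is no leftover slack for a $\ln 2$ conversion), a union bound over the $z$ runs controls the abort probability, and Theorem~\ref{thm:sampling} yields the $O(m+uz\Delta)=O(uz\Delta)$ running time. The one point you leave hedged---the $\epsilon^{4}$ half of the union bound---in fact closes unconditionally without appealing to the specific choices of $z$ in the algorithms, since either $z\leq\epsilon^{-1}$, in which case $z\cdot\epsilon^{5}\leq\epsilon^{4}$, or $z>\epsilon^{-1}$, in which case $\min(z^{-5},\epsilon^{5})=z^{-5}$ and $z\cdot z^{-5}=z^{-4}<\epsilon^{4}$.
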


\begin{proof}
Let $t,z,\distr,r,\epsilon$ be as in the lemma statement. Also, let $u=u_{z,r}$.
The run time of MetaSimulation($t$,$z$,$u$,$\distr$) is $O(uz\Delta)$ (by Theorem~\ref{thm:sampling}), which leads to the desired run time when inserting $u_{z,r}$.
By choice of $u$, we have that \[
1-\Pr[\cale_u]\leq 2^{-5\max(\log z,\log \epsilon^{-1})}=\min(z^{-5},\epsilon^{5})\enspace ,
\]
using Theorem~\ref{thm:fixationtime}. Thus, any single simulation takes more than $u$ time with probability less than  $\min(z^{-5},\epsilon^{-5})$. 
Union bounding the $z$ simulations, we see that the probability that any one takes more than $u$ time is at most $\min(z^{-4},\epsilon^{-4})$. 
Note that only in case one simulation takes more than $u$ time does the algorithm output something which is not a number.
\end{proof}

The remainder of this section is about showing the correctness and running times of the various algorithms.

\begin{lemma}\label{lem:algo1}
Algorithm $\algo1$ is correct and runs in time $T_1$, for  
\[
T_1=\begin{cases}
O(n^2\cdot \Delta\cdot \epsilon^{-2} \cdot \max(\log n,\log \epsilon^{-1})) &\ift r\geq 2\Delta\\
O(\frac{n^2\cdot \Delta^2}{\epsilon^2\cdot \min(r-1,1)} \cdot \max(\log n,\log \epsilon^{-1})) &\ift 1+\frac{1}{n\cdot \Delta}\leq r< 2\Delta\\
O(n^3\cdot \Delta^3\cdot \epsilon^{-2} \cdot \max(\log n,\log \epsilon^{-1})) &\ift 1<r<1+\frac{1}{n\cdot \Delta}
\end{cases}
\]
\end{lemma}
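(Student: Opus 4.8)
The plan is to prove Lemma~\ref{lem:algo1} by combining the correctness machinery of Lemma~\ref{lem:meta_condition} with the runtime and tail bounds of Lemma~\ref{lem:u_zr}, specialized to the fixation problem for $t_1$. First I would establish correctness. Recall that $\algo1$ runs \textsc{MetaSimulation}($t_1$,$z$,$u_{z,r}$,$\distr$) with $z=48n/\epsilon^2$ and $\distr$ uniform over the $n$ single-$t_1$ type functions; thus its output estimates $\Pr[\calf_{t_1}\mid \cale_u]$, where $\calf_{t_1}$ is fixation of $t_1$ and $\cale_u$ is the event that fixation occurs within $u=u_{z,r}$ effective steps. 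The target quantity $\rho=\Pr[\calf_{t_1}]$ is the average fixation probability over $\distr$, which by Lemma~\ref{lem:neutral_fix_pr} satisfies $\rho\geq 1/n$. To invoke Lemma~\ref{lem:meta_condition} with $\calx=\calf_{t_1}$ and $\cale=\cale_u$, I need two things: that $\Pr[\cale_u]\geq 1-\tfrac{\epsilon\cdot\rho}{4}$, and that the empirical mean lands in $[(1-\epsilon/2)\Pr[\calx\mid\cale],(1+\epsilon/2)\Pr[\calx\mid\cale]]$ with good probability.

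For the first condition, Lemma~\ref{lem:u_zr} gives $1-\Pr[\cale_u]\leq\min(z^{-5},\epsilon^5)$. Since $\rho\geq 1/n$ and $z=48n/\epsilon^2\geq n$, I would check that $\min(z^{-5},\epsilon^5)\leq\tfrac{\epsilon}{4n}\leq\tfrac{\epsilon\rho}{4}$; this is a short calculation using $\epsilon<1/2$ and $z\geq n$, so the hypothesis of Lemma~\ref{lem:meta_condition} on $\cale$ holds. For the second condition, the output is a mean of $z$ i.i.d.\ Bernoulli variables with success probability $q:=\Pr[\calf_{t_1}\mid\cale_u]$. Since conditioning on the high-probability event $\cale_u$ perturbs the fixation probability only slightly, $q$ is comparable to $\rho\geq 1/n$, so $q=\Omega(1/n)$. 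A Chernoff bound on the empirical mean then shows that with $z=\Theta(n/\epsilon^2)$ samples the relative deviation exceeds $\epsilon/2$ with probability at most a small constant; together with the at most $\min(z^{-4},\epsilon^4)$ chance (from Lemma~\ref{lem:u_zr}) that the algorithm aborts by timing out, a union bound keeps the total failure probability below $1/4$. Combining the two conditions via Lemma~\ref{lem:meta_condition} yields output in $[(1-\epsilon)\rho,(1+\epsilon)\rho]$ with probability at least $3/4$, establishing correctness.

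For the runtime $T_1$, I would simply substitute $z=48n/\epsilon^2$ into the three-case runtime bound of Lemma~\ref{lem:u_zr}. In each case $\max(\log z,\log\epsilon^{-1})=\Theta(\max(\log n,\log\epsilon^{-1}))$ because $z=\Theta(n/\epsilon^2)$ makes $\log z=\Theta(\log n+\log\epsilon^{-1})$; the $\epsilon^{-2}$ factor inside $z$ contributes only logarithmically and is absorbed. Plugging $n\Delta z$, $n\Delta^2 z/\min(r-1,1)$, and $n^2\Delta^3 z$ respectively gives exactly the three stated expressions for $T_1$.

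The main obstacle I anticipate is pinning down the Chernoff argument for the second condition: specifically, justifying that $q=\Pr[\calf_{t_1}\mid\cale_u]$ is $\Omega(1/n)$ rather than collapsing, and verifying that the chosen sample count $z=48n/\epsilon^2$ drives the relative error below $\epsilon/2$ with small enough constant failure probability to survive the union bound with the timeout event. This requires a careful multiplicative Chernoff estimate where the number of samples must scale inversely with the smallest plausible value of $q$; the factor $48$ and the bound $\rho\geq 1/n$ from Lemma~\ref{lem:neutral_fix_pr} are precisely what make this work, and getting the constants to close is the delicate part. Everything else—the $\cale_u$ probability bound and the runtime substitution—is routine given Lemmas~\ref{lem:u_zr} and~\ref{lem:meta_condition}.
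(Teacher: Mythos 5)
Your proposal matches the paper's proof essentially step for step: the paper also verifies $\Pr[\mathcal{E}_u]\geq 1-\epsilon\Pr[\mathcal{F}_{t_1}]/4$ via Lemma~\ref{lem:u_zr} and $\Pr[\mathcal{F}_{t_1}]\geq 1/n$, lower-bounds $\Pr[\mathcal{F}_{t_1}\mid\mathcal{E}_u]$ by $\tfrac{1}{2n}$ to run a multiplicative Chernoff bound with $z=48n/\epsilon^2$, union-bounds with the timeout event, and reads off $T_1$ by substituting $z$ into Lemma~\ref{lem:u_zr}. The delicate point you flag (that $q=\Omega(1/n)$ and the constant $48$ closes the Chernoff estimate) is exactly where the paper's computation $2e^{-2z\epsilon^2/(3\cdot 16 n)}=2e^{-2}$ lands, so the approaches coincide.
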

\begin{proof}
Recall if $r=1$, then the fixation probability of $t_1$ is $\frac{1}{n}$. 
We consider the other cases.
Otherwise, the output of $\algo1$ is MetaSimulation($t_1$,$z$,$u_{z,r}$,$\distr$), 
for $z=48\cdot \frac{n}{\epsilon^2}$. The run time then follows from Lemma~\ref{lem:u_zr}.

We will utilize Lemma~\ref{lem:meta_condition} to show the correctness of $\algo1$.
We thus need to ensure that $\Pr[\cale_u]\geq 1-\frac{\epsilon\cdot \Pr[\calf_{t_1}]}{4}$ and that with probability 
$\frac{3}{4}$ our algorithm outputs $x$, for \[x\in [(1-\epsilon/2)\Pr[\calf_{t_1}\mid \cale_u],(1+\epsilon/2)\Pr[\calf_{t_1}\mid \cale_u]]\enspace .\]
According to Lemma~\ref{lem:u_zr}, we have that \[\Pr[\cale_u]\geq 1-\min(z^{-5},\epsilon^{5})=1-z^{-5}\geq 1-\frac{\epsilon}{4n}\geq 1-\frac{\epsilon\cdot \Pr[\calf_{t_1}]}{4}\enspace ,\] using that $\Pr[\calf_{t_1}]$ is at least $\frac{1}{n}$ according to Lemma~\ref{lem:neutral_fix_pr} as desired.

\subparagraph*{Error sources}
We see that there are two cases where $\algo1$ does not output a number in $[(1-\epsilon)\cdot \Pr[\calf_{t_1}],(1+\epsilon)\cdot \Pr[\calf_{t_1}]]$: 
\begin{enumerate}
\item The algorithm does not output a number. This happens with probability at most \[\min(z^{-4},\epsilon^{4})=(48\cdot \frac{n}{\epsilon^2})^{-4}<(48\cdot \frac{2}{(1/2)^2})^{-4}=384^{-4}\enspace ,\] according to Lemma~\ref{lem:u_zr}.
\item The algorithm outputs a number outside $[(1-\epsilon)\cdot \Pr[\calf_{t_1}],(1+\epsilon)\cdot \Pr[\calf_{t_1}]]$.
\end{enumerate}

\subparagraph*{Simulations are far from expectation}
 Let the output of the algorithm, conditioned on it being a number, be the random variable $\calb$. 
 Observe that if 
 \[
 \calb\in [(1-\epsilon/2)\cdot \Pr[\calf_{t_1}\mid \cale_u],(1+\epsilon/2)\cdot \Pr[\calf_{t_1}\mid \cale_u]] \enspace ,
 \] 
 then, according to Lemma~\ref{lem:meta_condition}, we have that $\calb\in [(1-\epsilon)\cdot \Pr[\calf_{t_1}],(1+\epsilon)\cdot \Pr[\calf_{t_1}]]$, since we already argued that $\Pr[\cale_u]\geq 1-\frac{\epsilon\cdot \Pr[\calf_{t_1}]}{4}$.
 We are thus interested in the probability 
 \[
 \Pr[|\calb-\Pr[\calf_{t_1}\mid \cale_u]|\geq \epsilon/2\Pr[\calf_{t_1}\mid \cale_u]] \enspace .
 \]  
Applying Lemma~\ref{lem:meta_condition}, with $\calx=\calf_{t_1}$, $\cale=\cale_u$ and $x=\Pr[\calf_{t_1}\mid \cale_u]$, 
we see that $\Pr[\calf_{t_1}\mid \cale_u]\geq (1-\epsilon)\cdot\Pr[\calf_{t_1}]\geq \frac{\Pr[\calf_{t_1}]}{2}\geq \frac{1}{2n}$. 
Thus, 
\[
\Pr[|\calb-\Pr[\calf_{t_1}\mid \cale_u]|\geq \epsilon/2\Pr[\calf_{t_1}\mid \cale_u]]\leq \Pr[|\calb-\Pr[\calf_{t_1}\mid \cale_u]|\geq \frac{\epsilon}{4n}] \enspace .
\] 
Since $\calb$ is the average of $z$ independent simulations of a binary random variable, each of which are~1 with probability 
$\Pr[\calf_{t_1}\mid \cale_u]\geq \frac{1}{n}$ (by Lemma~\ref{lem:neutral_fix_pr}), we can apply the multiplicative Chernoff bound and obtain:
\[
\Pr[|z\calb-z\Pr[\calf_{t_1}\mid \cale_u]|\geq \frac{z\epsilon}{4n}]\leq 
2e^{-2z\frac{\epsilon^2}{3\cdot 4^2n}}=2e^{-2}\enspace .
\]
We have that $384^{-4}+2e^{-2}<1/3$. 
The desired result follows.
\end{proof}

\begin{lemma}\label{lem:algo2}
Algorithm $\algo2$ is correct and runs in time $T_2$, for  \[
T_2=\begin{cases}
O(n\cdot \Delta\cdot \epsilon^{-2}\cdot\log \epsilon^{-1}) &\ift r\geq 2\Delta\\
O(\frac{n\cdot \Delta^2\cdot\log \epsilon^{-1}}{\epsilon^2\cdot \min(r-1,1)}) &\ift 1+\frac{1}{n\cdot \Delta}\leq r< 2\Delta\\
O(n^2\cdot \Delta^3\cdot \epsilon^{-2}\cdot\log \epsilon^{-1}) &\ift 1<r<1+\frac{1}{n\cdot \Delta}
\end{cases}
\]
\end{lemma}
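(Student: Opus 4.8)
The plan is to mirror the proof of Lemma~\ref{lem:algo1}, exploiting the single structural difference that makes the extinction problem cheaper: by Lemma~\ref{lem:neutral_fix_pr} the solution $\rho$ of the extinction problem for $t_1$ satisfies $\rho \geq 1-\frac{1}{n}\geq \frac12$, a constant bounded away from $0$, whereas in the fixation case the analogous quantity could be as small as $\frac1n$. This is exactly why $\algo2$ may take only $z=24/\epsilon^{2}$ samples (instead of $z=48n/\epsilon^{2}$), and it is the sole source of the factor-$n$ improvement of $T_2$ over $T_1$.

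For the running time I would simply substitute $u=u_{z,r}$ and $z=24/\epsilon^{2}$ into Lemma~\ref{lem:u_zr} (recall MetaSimulation runs in $O(uz\Delta)$ time by Theorem~\ref{thm:sampling}). Since $z=\Theta(\epsilon^{-2})$ we have $\max(\log z,\log\epsilon^{-1})=\Theta(\log\epsilon^{-1})$, and the three branches of Lemma~\ref{lem:u_zr} then collapse to precisely the three cases of $T_2$.

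For correctness I would set $\calx=\calf_{t_1}$, $\cale=\cale_u$ and invoke Lemma~\ref{lem:meta_condition}, so two things must be verified. First, $\Pr[\cale_u]\geq 1-\frac{\epsilon\rho}{4}$: Lemma~\ref{lem:u_zr} gives $1-\Pr[\cale_u]\leq\min(z^{-5},\epsilon^{5})\leq\epsilon^{5}$, while $\frac{\epsilon\rho}{4}\geq\frac{\epsilon}{8}$ for $n\geq 2$, and $\epsilon^{5}\leq\frac{\epsilon}{8}$ whenever $0<\epsilon<\frac12$. Second, with probability at least $\frac34$ the returned value $\calb$ must lie in $[(1-\epsilon/2)p,(1+\epsilon/2)p]$ with $p=\Pr[\calf_{t_1}\mid\cale_u]$; Lemma~\ref{lem:meta_condition} then promotes this to the target interval around $\rho$. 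As in Lemma~\ref{lem:algo1} there are two failure modes: the algorithm reports ``took too long'', which by Lemma~\ref{lem:u_zr} has probability at most $\min(z^{-4},\epsilon^{4})$, a tiny constant; and $\calb$, the mean of $z$ independent Bernoulli$(p)$ trials, deviates from $p$ by more than $\frac{\epsilon}{2}p$, which I would bound by the multiplicative Chernoff inequality. The crucial input is $p\geq(1-\epsilon)\rho$, again a constant bounded away from $0$, so $\mu=zp=\Theta(\epsilon^{-2})$ and the Chernoff exponent is a positive constant; the constant $24$ in the choice of $z$ is calibrated so the two failure probabilities sum to a constant below $\frac14$.

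The main obstacle is purely the bookkeeping of constants in the Chernoff step. Unlike the fixation case, where $p\geq\frac{1}{2n}$ forces the additive target $\frac{\epsilon}{2}p$ down to $\Theta(\epsilon/n)$ and therefore demands $z=\Theta(n/\epsilon^{2})$, here $p$ is a constant, the additive target is $\Theta(\epsilon)$, and $z=\Theta(\epsilon^{-2})$ already suffices; one only has to check that, with the concrete value $z=24/\epsilon^{2}$ and $p\geq\frac12$ (for $n\geq2$), the exponent is large enough that the total error stays below $\frac14$. Everything else transfers verbatim from the proof of Lemma~\ref{lem:algo1}, with the extinction bound of Lemma~\ref{lem:neutral_fix_pr} replacing the fixation bound throughout.
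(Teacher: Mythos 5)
Your proposal is correct and follows essentially the same route as the paper: the paper's own proof simply says to repeat the argument of Lemma~\ref{lem:algo1}, replacing the fixation lower bound $\frac{1}{n}$ with the extinction lower bound $1-\frac{1}{n}\geq\frac{1}{2}$ from Lemma~\ref{lem:neutral_fix_pr}, so that the multiplicative Chernoff deviation target becomes $\frac{\epsilon}{4}$ instead of $\frac{\epsilon}{4n}$ and $z=24/\epsilon^{2}$ suffices, with the running time read off from Lemma~\ref{lem:u_zr}. Your identification of this as the sole source of the factor-$n$ saving, and your bookkeeping of the two failure modes, match the paper's intent exactly.
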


\begin{proof}
The proof is similar to Lemma~\ref{lem:algo1}, except applying multiplicative Chernoff Bound to 
\[\Pr[|z\calb-z\Pr[\calf_{t_1}\mid \cale_u]|\geq z\epsilon\Pr[\calf_{t_1}\mid \cale_u]/2]\leq 
\Pr[|\calb-\Pr[\calf_{t_1}\mid \cale_u]|\geq \frac{\epsilon}{4}]\enspace ,\] 
using that $\Pr[\calf_{t_1}\mid \cale_u]\geq 1-\frac{1}{n}\geq \frac{1}{2}$ 
by Lemma~\ref{lem:neutral_fix_pr}.
\end{proof}

\begin{lemma}\label{lem:algo3}
Algorithm $\algo3$ is correct and runs in time $T_3$, for  \[
T_3=\begin{cases}
O(n\cdot \Delta\cdot \epsilon^{-2}\cdot \log \epsilon^{-1}) &\ift r\geq 2\Delta\\
O(\frac{n\cdot \Delta^2\cdot \log \epsilon^{-1}}{\epsilon^2\cdot \min(r-1,1)}) &\ift 1+\frac{1}{n\cdot \Delta}\leq r< 2\Delta\\
O(n^2\cdot \Delta^3\cdot \epsilon^{-2} \cdot \log \epsilon^{-1}) &\ift 1\leq r<1+\frac{1}{n\cdot \Delta}
\end{cases}
\]
\end{lemma}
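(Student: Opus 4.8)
The plan is to follow the same template established in the proofs of Lemma~\ref{lem:algo1} and Lemma~\ref{lem:algo2}, since Algorithm $\algo3$ is structurally identical: it is a single call to MetaSimulation with $z=6/\epsilon^2$, and the only differences from the previous two cases are the target quantity (here a generalized fixation probability, which may be exponentially small) and the fact that we want an \emph{additive} rather than multiplicative approximation. First I would note that the running time $T_3$ follows immediately from Lemma~\ref{lem:u_zr} by substituting $z=6/\epsilon^2$, exactly as the running times $T_1$ and $T_2$ were derived. So the entire substance of the proof is the correctness argument, and the key insight is that additive approximation is \emph{easier} than multiplicative, so we do not need a lower bound on $\Pr[\calf_t]$ of the form $1/n$ or $1-1/n$ that was essential in the previous two lemmas.

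The main correctness argument runs as follows. Let $\calf_t$ be the event that $t$ fixates from the given initial type function $\f$ and $\cale_u$ the event that fixation occurs within $u=u_{z,r}$ effective steps. By Lemma~\ref{lem:u_zr} with $z=6/\epsilon^2$ we have $1-\Pr[\cale_u]\leq \min(z^{-5},\epsilon^5)\leq \epsilon^5$, so $\Pr[\cale_u]$ is within $\epsilon^5$ of $1$, and the algorithm fails to return a number with probability at most $\min(z^{-4},\epsilon^4)\leq \epsilon^4$, which is bounded by a small constant. Next I would bound $|\Pr[\calf_t\mid\cale_u]-\Pr[\calf_t]|$: writing $\Pr[\calf_t]=\Pr[\cale_u]\Pr[\calf_t\mid\cale_u]+(1-\Pr[\cale_u])\Pr[\calf_t\mid\neg\cale_u]$ and using $0\leq\Pr[\calf_t\mid\cdot]\leq 1$ together with $1-\Pr[\cale_u]\leq\epsilon^5$, one gets that the conditional and unconditional probabilities differ by at most $\epsilon^5\leq\epsilon/2$ (for $\epsilon<1/2$). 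Thus it suffices to additively approximate $\Pr[\calf_t\mid\cale_u]$ to within $\epsilon/2$. Since the returned value $\calb$ (conditioned on a number being output) is the empirical mean of $z=6/\epsilon^2$ i.i.d.\ Bernoulli variables each with mean $\Pr[\calf_t\mid\cale_u]$, Hoeffding's inequality gives $\Pr[|\calb-\Pr[\calf_t\mid\cale_u]|\geq\epsilon/2]\leq 2e^{-2z(\epsilon/2)^2}=2e^{-3}$. Combining the two failure probabilities $\epsilon^4+2e^{-3}<1/4$ yields that with probability at least $3/4$ the output lies in $[\rho-\epsilon,\rho+\epsilon]$ where $\rho=\Pr[\calf_t]$.

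The one step requiring a little care—and the only place the structure diverges from Lemma~\ref{lem:meta_condition}, which is tailored for multiplicative error—is the passage from conditioning on $\cale_u$ to the unconditional probability. I would \emph{not} invoke Lemma~\ref{lem:meta_condition} here, because that lemma's hypothesis $\Pr[\cale]\geq 1-\epsilon\Pr[\calx]/4$ needs a positive lower bound on $\Pr[\calx]$, which is unavailable for the generalized problem (the remark immediately after the theorem statement emphasizes $\rho$ can be exponentially small). Instead I would give the direct two-line bound on $|\Pr[\calf_t\mid\cale_u]-\Pr[\calf_t]|$ above, which only uses $1-\Pr[\cale_u]\leq\epsilon^5$ and the trivial bounds on conditional probabilities, and hence works uniformly regardless of how small $\rho$ is. This is exactly why the theorem only claims an additive guarantee for the generalized problem rather than an FPRAS. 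The remaining verification that $\epsilon^4+2e^{-3}<1/4$ is a routine numerical check, and the running-time bookkeeping is a direct citation of Lemma~\ref{lem:u_zr}, so no genuine obstacle remains beyond correctly handling the conditioning without a multiplicative lower bound.
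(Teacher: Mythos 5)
Your proposal is correct and takes essentially the same approach as the paper: the running time is read off from Lemma~\ref{lem:u_zr} with $z=6/\epsilon^2$, the passage from $\Pr[\calf_t\mid\cale_u]$ to $\Pr[\calf_t]$ is handled by the direct additive bound $|\Pr[\calf_t]-\Pr[\calf_t\mid\cale_u]|\leq 1-\Pr[\cale_u]\leq \epsilon/2$ (the paper likewise bypasses Lemma~\ref{lem:meta_condition} here, for exactly the reason you identify), and Hoeffding's inequality yields the same $2e^{-3}$ deviation bound. The only difference is the final numerical tally ($\epsilon^4+2e^{-3}<1/4$ versus the paper's $24^{-4}+2e^{-3}<1/10$), which is immaterial.
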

\begin{proof}
The proof is somewhat similar to Lemma~\ref{lem:algo1}, but there are some differences and thus we include the proof.

The output of $\algo3$ is MetaSimulation($t$,$z$,$u_{z,r}$,$\distr$), for $z=6/\epsilon^{2}$ and $\distr$ the distribution assigning probability~1 to $\f$. 
We see that \begin{align*}
\Pr[\calf_t]&=\Pr[\cale_u]\cdot \Pr[\calf_t\mid \cale_u]+(1-\Pr[\cale_u])\cdot \Pr[\calf_t\mid \neg \cale_u]\geq \Pr[\calf_t\mid \cale_u]\cdot \Pr[\calf_t\mid \cale_u]\\
&\geq (1-z^{-5})\cdot \Pr[\calf_t\mid \cale_u]\geq \Pr[\calf_t\mid \cale_u] - z^{-5}\geq \Pr[\calf_t\mid \cale_u] - \epsilon/2\enspace ,
\end{align*}
using Lemma~\ref{lem:u_zr}. 
Also, \[\Pr[\calf_t]=\Pr[\cale_u]\cdot \Pr[\calf_t\mid \cale_u]+(1-\Pr[\cale_u])\cdot \Pr[\calf_t\mid \neg \cale_u]\leq \Pr[\calf_t\mid \cale_u]+z^{-5}\leq \Pr[\calf_t\mid \cale_u]+\epsilon/2 \enspace .\]
Thus, we just need to ensure that we output a number $x$ in 
$[\Pr[\calf_t\mid \cale_u]-\epsilon/2,\Pr[\calf_t\mid \cale_u]+\epsilon/2]$ with probability at least $\frac{3}{4}$.

We output a number with probability at least $1-z^{-4}>1-24^{-4}$ according to Lemma~\ref{lem:u_zr}.
In case we output a number, we can apply Hoeffding's inequality, since the output is the average of $z$ trials, and obtain:\[
\Pr[|\calb-\Pr[\calf_{t}\mid \cale_u]|\geq \frac{\epsilon}{2}]\leq 2e^{-2z\frac{\epsilon^2}{2^2}}=2e^{-3}\enspace .
\]
We see that we thus fail to output a number in $[\Pr[\calf_t]-\epsilon,\Pr[\calf_t]+\epsilon]$ with probability at most $2e^{-3}+24^{-4}<1/10$.
This completes the proof.
\end{proof}

Theorem~\ref{thm:FPRAS} follows from Lemmas~\ref{lem:algo1}, \ref{lem:algo2}, \ref{lem:algo3}.

\section{Algorithm for extinction of $t_2$}
In this section we present the algorithm for extinction of $t_2$.
First we present an approximation of the fixation probability.

\subsection{Approximating the fixation probability}
We will next show that whenever $r$ is sufficiently big, $\frac{1}{r}$ is a good approximation of the solution of the extinction problem for $t_2$.
To do so, we first argue that with probability at most $\frac{1}{r+1}$ does $t_1$ go extinct in the first effective step, when there is only a single random node which is a member of $t_1$.

\begin{lemma}\label{lem:extinct_in_1}
The probability that $t_1$ goes extinct in the first effective step is at most $\frac{1}{r+1}$, when initially a single random node is of type $t_1$.
\end{lemma}
\begin{proof}
The proof is similar to the proof of \cite[Lemma~4]{Diaz14}, except that instead of lower bounding, we try to upper bound.
For each node $v\in V$, let $Q(v)=\sum_{(u,v)\in E}\frac{1}{\deg u}$. Observe that $\sum_{v\in V}Q(v)=n$.
Let $p$ be the probability that $t_1$ does not go extinct in the first effective step, when initially a single random node is of type $t_1$.
From the proof of \cite[Lemma~4]{Diaz14}, we see that $p=\frac{r}{n}\sum_{v\in V}\frac{1}{r+Q(v)}$. 
We want to find the minimum $p$ such that $p=\frac{r}{n}\sum_{i=1}^n\frac{1}{r+q_i}$, subject to the constraint that $q_i>0$ and $\sum_{i=1}^nq_i=n$.

\begin{claim}
The number $p$ is minimized for $q_i=1$ for all $i$.
\end{claim}
\begin{proof}
We will argue that $p$ is minimized for $q_i=1$ for all $i$, as follows:
Our argument is by contradiction. For a vector $q$ of length $n$, let $p^q=\frac{r}{n}\sum_{i=1}^n\frac{1}{r+q_i}$.
Assume that $q^*=(q_i^*)_{i\in [1,\dots,n]}$ is a vector of $q_i$'s that minimizes $p^q$ and that $q_j^*\neq q_k^*$ for some $j,k$ (this is the case if not all are~1). Let $2z=q_j^*+q_k^*$. Consider the vector $q'$ such that $q_i'=q_i^*$ for all $i$, except that $q_j'=q_k'=z$. We will argue that $p^{q'}<p^{q^*}$, contradicting that $q^*$ minimizes $p^q$.
We have that 
\begin{align*}
\frac{1}{r+q_j^*}+\frac{1}{r+q_k^*}&=\frac{r+q_k^*+r+q_j^*}{(r+q_j^*)(r+q_k^*)}=\frac{2r+2z}{r^2+2rz+q_j^*q_k^*}>\frac{2r+2z}{r^2+2rz+z^2}=\frac{1}{r+z}+\frac{1}{r+z}\enspace ,
\end{align*}
since, $q_j^*q_k^*<z^2$ and thus, since all other terms of $p^{q'}$ is equal to $p^{q^*}$, we have that $p^{q'}<p^{q^*}$. Thus, in the worst case $q_i=1$ for all $i$.
\end{proof}

Hence, $p\geq \frac{r}{r+1}=1-\frac{1}{r+1}$. This completes the proof.
\end{proof}

\begin{reptheorem}{thm:too_big_r}
Let $0<\epsilon<\frac{1}{2}$ be given.
If \[r\geq \max(\Delta^2,n)/\epsilon\enspace ,\]
then \[\frac{1}{r}\in [(1-\epsilon)\cdot \rho,(1+\epsilon)\cdot \rho]\enspace ,\] where $\rho$ is the solution of the extinction problem for $t_2$.
\end{reptheorem}

\begin{proof}
Fix $0<\epsilon<\frac{1}{2}$ and $r\geq \max(\Delta^2,n)/\epsilon$. Let $\rho$ be the solution of the extinction problem for $t_2$.
We consider two cases.

\subparagraph*{$\bm{\frac{1}{r}}$ is below $\bm{(1+\epsilon)\cdot \rho}$}
By \cite[Lemma 4]{Diaz14} (recalled as Lemma~\ref{lem:extinct_lowerbound}), we have that $\rho\geq \frac{1}{n+r}$.
Thus, $(1+\epsilon)\cdot \rho\geq \frac{1+\epsilon}{n+r}> \frac{1+\epsilon}{(1+\epsilon)r}=\frac{1}{r}$, as desired, since $n/\epsilon\leq r$.

\subparagraph*{$\bm{\frac{1}{r}}$ is above $\bm{(1-\epsilon)\cdot \rho}$}
The probability that after the first effective step there are two nodes of type $t_1$ is at least $1-\frac{1}{r+1}$, by Lemma~\ref{lem:extinct_in_1}.
In every subsequent step before fixation, we have probability at most $\frac{1}{r/\Delta+1}$, by Lemma~\ref{lem:large_r}, of decreasing the number of nodes of type $t_1$ by~1 and otherwise we increase the number by~1.

\subparagraph*{Modeling as Markov chain}
Consider a biased random walk on a line with absorbing boundaries (this problem is also sometimes called Gambler's Ruin). More precisely, let $M$ be the following Markov chain, with states $\{0,\dots,n\}$, and whenever $0$ or $n$ is reached, they will never be left (i.e., the states are absorbing). Also, in each other state $i\in \{1,\dots,n-1\}$, there is a probability of $p=\frac{1}{r/\Delta+1}$ of going to $i-1$ and otherwise, the next state is $i+1$.
 The fixation probability for $t_2$, if fixation did not happen in the first step, is at most the absorption probability in state~0 in $M$ starting in state 2. 
The absorption probability in state~0 of $M$, starting in state 2, is well-known, and it is 
\[
\frac{(\frac{1-p}{p})^{n-2}-1}{(\frac{1-p}{p})^{n}-1}=\frac{(\frac{r}{\Delta})^{n-2}-1}{(\frac{r}{\Delta})^{n}-1}<\frac{(\frac{r}{\Delta})^{n-2}}{(\frac{r}{\Delta})^{n}}= \Delta^2/r^2\leq \epsilon/r
\]
using that $\frac{1-p}{p}=\frac{r}{\Delta}$ in the first equality, $\frac{a-1}{b-1}<\frac{a}{b}$ for all $1<a<b$ in the first inequality and $\Delta^2/\epsilon\leq r$ in the last inequality. 
Hence, the probability to fixate for $t_2$ is below $\epsilon/r$ if $t_2$ does not fixate in the first step.
Thus, the probability $\rho$ of fixation for $t_2$ is at most  $(1+\epsilon)\cdot \frac{1}{r}$.
Observe that 
\[
(1-\epsilon)\rho\leq (1-\epsilon)(1+\epsilon)\cdot \frac{1}{r}<\frac{1}{r}, \quad \text{ since } (1-\epsilon)(1+\epsilon)=1-\epsilon^2<1 \enspace .
\]
Hence, $\frac{1}{r}$ is in $[(1-\epsilon)\cdot \rho,(1+\epsilon)\cdot \rho]$ as desired.
\end{proof}

\subsection{Algorithm}

\subparagraph*{Algorithm $\bm{\algo4}$}
We consider the extinction problem for $t_2$.
Algorithm $\algo4$ is as follows:
\begin{enumerate}
\item If $r\geq \max(\Delta^2,n)/\epsilon$ return $\frac{1}{r}$.
\item Let $\distr$ be the uniform distribution over the $n$ type functions where exactly one node is $t_1$.
\item 
Return MetaSimulation($t_2$,$z$,$u_{z,r}$,$\distr$), for $z=24\frac{(n+r)^2}{\epsilon^2}$.
\end{enumerate}

We next prove the following theorem.
\begin{theorem}\label{thm:FPRAS2}
Let $G$ be a connected undirected graph of $n$ nodes with highest degree $\Delta$, 
divided into two types of nodes $t_1,t_2$, such that $r=w(t_1) > w(t_2)=1$. 
Let $\frac{1}{2}>\epsilon>0$.

\begin{itemize}
\item {\bf Extinction problem given $\bm{t_2}$}
Let $w=\max(\Delta^2,n)/\epsilon$ and let $T$ be \[
T=\begin{cases}
O(1)&\ift r\geq w\\
O(r^2\cdot n\cdot \Delta\cdot \epsilon^{-2}\cdot \max(\log r,\log \epsilon^{-1})) &\ift  n\leq r< w\\
O(n^3\cdot \Delta\cdot \epsilon^{-2} \cdot \max(\log n,\log \epsilon^{-1})) &\ift 2\Delta\leq r< n\\
O(\frac{n^3\cdot \Delta^2}{\epsilon^2\cdot \min(r-1,1)} \cdot \max(\log n,\log \epsilon^{-1})) &\ift 1+\frac{1}{n\cdot \Delta}\leq r< 2\Delta\\
O(n^4\cdot \Delta^3\cdot \epsilon^{-2}\cdot \max(\log n,\log \epsilon^{-1})) &\ift 1<r<1+\frac{1}{n\cdot \Delta} \enspace .
\end{cases}
\]
Algorithm $\algo4$ is an FPRAS algorithm, with running time $T$, that with probability atleast $\frac{3}{4}$ 
outputs a number in $[(1-\epsilon)\cdot \rho,(1+\epsilon)\cdot \rho]$, where $\rho$ 
is the solution of the extinction problem given $t_2$.
\end{itemize}
\end{theorem}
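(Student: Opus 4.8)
The plan is to analyze the two branches of $\algo4$ separately and then argue that each meets the $[(1-\epsilon)\rho,(1+\epsilon)\rho]$ guarantee. The first branch ($r\ge w$, where $w=\max(\Delta^2,n)/\epsilon$) is immediate: $\algo4$ returns $\frac1r$ deterministically in $O(1)$ time, and Theorem~\ref{thm:too_big_r} already certifies that $\frac1r\in[(1-\epsilon)\rho,(1+\epsilon)\rho]$ in exactly this regime. So the real work is in the second branch ($1<r<w$), which is a MetaSimulation analysis in the same spirit as Lemma~\ref{lem:algo1} and Lemma~\ref{lem:algo2}. The single structural change is the probability lower bound: here $\rho=\Pr[\calf_{t_2}]$ is the extinction probability of $t_2$, and by Lemma~\ref{lem:extinct_lowerbound} we only have $\rho>\tfrac1{n+r}$ rather than $\tfrac1n$; this is precisely what forces the sample count $z=24(n+r)^2/\epsilon^2$.

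For correctness of the simulation branch I would instantiate Lemma~\ref{lem:meta_condition} with $\calx=\calf_{t_2}$ and $\cale=\cale_u$ for $u=u_{z,r}$. First I check its hypothesis $\Pr[\cale_u]\ge 1-\tfrac{\epsilon\Pr[\calf_{t_2}]}{4}$: Lemma~\ref{lem:u_zr} gives $1-\Pr[\cale_u]\le z^{-5}$, and since $\rho>\tfrac1{n+r}$ it suffices to note $z^{-5}\le \tfrac{\epsilon}{4(n+r)}$, which holds for $z=24(n+r)^2/\epsilon^2$ because $4\cdot 24^{-5}(n+r)^{-9}\epsilon^{9}<1$. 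Next, applying Lemma~\ref{lem:meta_condition} in its degenerate form (with $x=\Pr[\calf_{t_2}\mid\cale_u]$) shows $\Pr[\calf_{t_2}\mid\cale_u]\ge(1-\epsilon)\rho>\tfrac1{2(n+r)}$, so the $z$-sample mean $\calb$ has mean at least $\tfrac1{2(n+r)}$. A multiplicative Chernoff bound at relative accuracy $\epsilon/2$ then yields $\Pr[\,|\calb-\Pr[\calf_{t_2}\mid\cale_u]|\ge\tfrac\epsilon2\Pr[\calf_{t_2}\mid\cale_u]\,]\le 2e^{-z\epsilon^2/(24(n+r))}=2e^{-(n+r)}<2e^{-3}$, using $n\ge2$ and $r>1$. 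Adding the $\le z^{-4}$ probability that the run aborts keeps the total failure probability below $\tfrac14$, and Lemma~\ref{lem:meta_condition} converts the relative-$\tfrac\epsilon2$ estimate of $\Pr[\calf_{t_2}\mid\cale_u]$ into the desired relative-$\epsilon$ estimate of $\rho$.

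The running-time bound I would obtain by substituting $z=24(n+r)^2/\epsilon^2$ into the three-regime formula of Lemma~\ref{lem:u_zr} and simplifying with $n+r=\Theta(\max(n,r))$ and $\log z=O(\max(\log n,\log r,\log\epsilon^{-1}))$. The regime $r\ge2\Delta$ of that lemma produces the theorem's cases $n\le r<w$ (where $(n+r)^2=\Theta(r^2)$) and $2\Delta\le r<n$ (where $(n+r)^2=\Theta(n^2)$); the regimes $1+\tfrac1{n\Delta}\le r<2\Delta$ and $r<1+\tfrac1{n\Delta}$ give the remaining two cases after using $r=\Theta(\Delta)$ and $r=\Theta(1)$ respectively to reduce $(n+r)^2$ to $\Theta(n^2)$. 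Polynomiality in all parameters, including $\log r$, follows because the simulation branch is only reached when $r<w=O(n^2/\epsilon)$, so $z=O(n^4/\epsilon^4)$.

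The main obstacle is conceptual rather than computational: it is the choice of the threshold $w$ and the need to stitch the two branches together. For $r\ge w$ the extinction probability is $\Theta(1/r)$, so a multiplicative approximation by sampling would require $z=\Omega((n+r)^2)=\Omega(r^2)$ simulations and hence time polynomial in $r$ itself --- not in $\log r$ --- which is why the analytic estimate of Theorem~\ref{thm:too_big_r} is indispensable for keeping the algorithm polynomial when $r$ is given in binary. The delicate bookkeeping is confirming that the sample count $z=\Theta((n+r)^2)$, matched to the bound $\rho>\tfrac1{n+r}$, makes the Chernoff failure and the abort failure sum to strictly less than $\tfrac14$ across the whole range $1<r<w$, and that the guarantees of the two branches agree at $r=w$.
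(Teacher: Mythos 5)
Your proof is correct and follows exactly the route the paper intends: the paper states Theorem~\ref{thm:FPRAS2} without writing out an explicit proof, but the argument is the same two-branch one you give --- Theorem~\ref{thm:too_big_r} for $r\ge w$, and otherwise the MetaSimulation template of Lemmas~\ref{lem:u_zr} and~\ref{lem:algo1} with the lower bound $\rho>\frac{1}{n+r}$ of Lemma~\ref{lem:extinct_lowerbound} driving the choice $z=24(n+r)^2/\epsilon^2$. The only caveat (inherited from the theorem statement itself, not introduced by you) is that the stated bound for the case $n\le r<w$ implicitly assumes $r\ge 2\Delta$; when $\Delta>n/2$ and $n\le r<2\Delta$ one lands in the middle regime of Lemma~\ref{lem:u_zr} and picks up an extra factor of $\Delta$.
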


\begin{remark}\label{rem:appendix}
A detailed comparison of our results and previous results of~\cite{Diaz14} is presented in Table~\ref{tab:appendix}.
Note that we do not present approximation of the fixation problem for $t_2$ (i.e., fixation for the type with the smallest fitness). 
This is because in this case the fixation probabilities can vary greatly, while being exponentially small, even for constant $r$: It is close to $\frac{1-r^{-2}}{1-r^{-2n}}$ for star graphs~\cite{Nowak05} and precisely $\frac{1-r^{-1}}{1-r^{-n}}$ for complete graphs~\cite{Nowak05}. For, for instance, $r=1/2$ it is thus close to \[\frac{1-2^2}{1-2^{2n}}=\frac{2^2-1}{2^{2n}-1}\approx 2^{-2n+2}\] for star graphs and  \[\frac{1-2}{1-2^{n}}=\frac{2-1}{2^{n}-1}=\frac{1}{2^{n}-1}\] for complete graphs. Observe that those two numbers differs by an exponentially large factor.
For this case~\cite{Diaz14} also do not present any polynomial-time algorithm.
\end{remark}

\begin{table}
\center
\begin{tabular}{| l l| c| c| }
\hline
   & & All steps & Effective steps \\\hline
    \#steps in expectation& $r>1$ & $O(n^2\Delta^2)$ & $O(n\Delta)$ \\\hline
& $r=1$ & $O(n^3\Delta^3)$ & $O(n^2\Delta^2)$ \\\hline
& $r<1$ & $O(n\Delta^2)$ & $O(\Delta)$ \\\hline
    Concentration bounds& $r>1$ & $\Pr[\tau \geq \frac{n^2\Delta^2rx}{r-1}]\leq 1/x$ & $\Pr[\tau \geq \frac{6n\Delta x}{\min(r-1,1)}]\leq 2^{-x}$ \\
    \hline
& $r=1$ & $\Pr[\tau \geq n^3\Delta^3x]\leq 1/x$ & $\Pr[\tau \geq 4n^2\Delta^2x]\leq 2^{-x}$ \\\hline
 & $r<1$ & $\Pr[\tau \geq \frac{n\Delta^2x}{1-r}]\leq 1/x$ & $\Pr[\tau \geq \frac{6n\Delta x}{1-r}]\leq 2^{-x}$ \\\hline
\multicolumn{2}{|l|}{Sampling a step}  & $O(1)$ & $O(\Delta)$ \\\hline
\multicolumn{2}{|l|}{Fixation for $t_1$}  & $O(n^6 \Delta^2\epsilon^{-4})$ & $O(n^2 \Delta^2\epsilon^{-2}(\log n+\log\epsilon^{-1}))$ \\\hline
\multicolumn{2}{|l|}{Extinction for $t_1$}  & $O(n^6 \Delta^2\epsilon^{-4})$ & $O(n \Delta^2\epsilon^{-2}\log\epsilon^{-1})$ \\\hline
\multicolumn{2}{|l|}{Extinction for $t_2$} & $O(n^5 \Delta^2\epsilon^{-4})$ & $O(n^3\Delta^2\epsilon^{-2}(\log n+\log\epsilon^{-1}))$ \\\hline
\multicolumn{2}{|l|}{Generalized fixation (additive)} & $-$ & $O(n \Delta^2\epsilon^{-2}\log\epsilon^{-1})$\\\hline
\end{tabular}
\caption{Comparison with previous work, for constant $r$. 
We denote by $n$, $\Delta$, $\tau$, and $\epsilon$, the number of nodes, the maximum degree, the random variable for the fixation time, and
the approximation factor, respectively.
The results in the column ``All steps'' is from \cite{Diaz14}, except that we present their dependency on $\Delta$ which was considered as $n$. 
The entry marked $-$ is done so, since \cite{Diaz14} did not provide an algorithm for the generalized fixation problem. The results of the column ``Effective steps'' is the results of this paper\label{tab:appendix}}
\end{table}

\subparagraph*{Variant of Theorem~\ref{thm:too_big_r} for directed graphs}
A variant of Theorem~\ref{thm:too_big_r} can also be proven for strongly connected (i.e. graphs where there is a path from any node to any node) directed graphs. However, the variant requires $O(m)$ computation.

In essence, the proof of Theorem~\ref{thm:too_big_r} is as follows: The probability that $t_2$ fixates in 1 step is always nearly $\frac{1}{r}$. 
If $t_2$ does not fixate in 1 step (and there are thus at least 2 nodes of type $t_1$), then $t_2$ has probability at most $\epsilon/r$ of fixating.

This suggests a simple scheme for the directed case: (1)~Find the probability $\rho_1$ that $t_2$ fixates in 1 step. 
(2)~Show that fixating if there are two states of type $t_2$ is small compared to $\rho_1$.

It is easy to find the probability that $t_2$ fixates in 1 step in any (even directed) graph, by simply consider each start configuration and computing the probability that fixation happens in 1 step. For a fixed configuration $\f$ with one node $v$ of type $t_1$, this takes time equal to the in- and out-degree of $v$ (assuming that the out- and in-degree of all nodes are known). Overall this thus takes $O(m)$ operations. The probability of fixation in 1 step for a configuration is in $[\frac{1}{\Delta r+1},\frac{n-1}{r+(n-1)}]$, recalling that $\Delta$ is the max out-degree). (This bound is easy to establish: 
Let $v$ be the node of type $t_1$. The bounds follow from that at least one node $u$ of type $t_2$ is such that $v$ is a successor of $u$ (resp. $u$ is a successor of $v$), because the graph is connected. On the other hand, all nodes of type $t_2$ might be a successor of $v$ (or the other way around))
It seems reasonably to consider that operations on that small numbers can be done in constant time. Therefore, we get a $O(m)$ running time.

To then show that fixating for $t_2$ is small (compared to $\rho_1$) one needs a variant of Lemma~\ref{lem:large_r} for directed graphs. This can easily be done, but just shows that the probability to increase the number of members of $t_2$ is at most $p:=\frac{1}{r/(n\Delta)+1}$ (i.e. it is a factor of $n$ worse as compared to undirected graphs). 
The remainder of the proof is then similar (i.e. we consider the Markov chain $M$ on $0,1,\dots,n$, starting in state 2, that has pr. $p$ of decrementing and otherwise increments). 
We get that the pr. of fixating for $t_2$ if it did not do it in the first step is at most $\epsilon/(2\Delta r)<\epsilon/(\Delta r+1)$ if $r\geq 2\Delta^3 n^2/\epsilon$. Hence, $\rho_1\in [(1-\epsilon)\cdot \rho,(1+\epsilon)\cdot \rho]$ for such $r$.
This leads to the following theorem.

\begin{theorem}
Let $\epsilon>0$ be given.
Let $G$ be a directed graph and let $r\geq 2\Delta^3 n^2/\epsilon$. 

Consider the extinction problem for $t_2$. Let $\rho$ be the fixation pr. and let $\rho_1$ be the pr. of fixation in 1~effective step. Then \[\rho_1\in [(1-\epsilon)\cdot \rho,(1+\epsilon)\cdot \rho]\enspace .\]
\end{theorem}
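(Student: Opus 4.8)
The plan is to follow the proof of Theorem~\ref{thm:too_big_r} step for step, substituting each undirected ingredient by its directed analogue and tracking the loss that forces the larger threshold $r\ge 2\Delta^3n^2/\epsilon$. First I would dispatch the easy inequality: fixating for $t_2$ in a single effective step is a special case of fixating for $t_2$, so $\rho_1\le\rho$ and hence $\rho_1\le(1+\epsilon)\rho$ holds unconditionally. All the work lies in the lower bound $\rho_1\ge(1-\epsilon)\rho$, equivalently $\rho-\rho_1\le\epsilon\rho$.

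Next I would set up the one-step decomposition. Starting from a single $t_1$ node $v$, an effective step either converts $v$ to $t_2$ (so $t_2$ has fixated, the event of probability $\rho_1$) or converts one $t_2$ node to $t_1$, leaving exactly two $t_1$ nodes. Writing $q$ for the conditional probability that $t_2$ eventually fixates given the process survives the first step with two $t_1$ nodes, we obtain $\rho=\rho_1+(1-\rho_1)q$. Hence it suffices to upper bound $q$ and to lower bound $\rho_1$. For the latter I would use strong connectivity: $v$ has at least one in-neighbor of type $t_2$, so some reproduction kills $v$ on the first step, giving $\rho_1\ge\frac{1}{\Delta r+1}$.

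To bound $q$ I need the two quantitative ingredients. The main one is a directed analogue of Lemma~\ref{lem:large_r}: in any effective step the probability of increasing the number of $t_2$ nodes is at most $p:=\frac{1}{r/(n\Delta)+1}$. In the undirected proof each boundary edge $(u,v)$ was paired with its reverse $(v,u)$, so the reproductive weights of the two types across the boundary could be compared cleanly; in the directed setting the out-neighborhoods used for reproduction no longer pair up symmetrically with in-neighborhoods, and the cleanest bound one extracts is a factor of $n$ worse. I then model the number of $t_1$ nodes as the biased walk $M$ on $\{0,\dots,n\}$ with absorbing endpoints and down-probability $p$, so $\frac{1-p}{p}=r/(n\Delta)$. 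The probability $q$ is at most the absorption probability at state $0$ from state $2$, which by gambler's ruin is $\frac{(r/(n\Delta))^{n-2}-1}{(r/(n\Delta))^{n}-1}<(n\Delta/r)^2=n^2\Delta^2/r^2$, and the hypothesis $r\ge 2\Delta^3n^2/\epsilon$ makes this at most $\frac{\epsilon}{2\Delta r}$.

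Finally I would combine: $\rho-\rho_1=(1-\rho_1)q\le q<\frac{\epsilon}{2\Delta r}\le\frac{\epsilon}{\Delta r+1}\le\epsilon\rho_1\le\epsilon\rho$, using $\Delta r\ge 1$ in the middle inequality and $\rho_1\ge\frac{1}{\Delta r+1}$ in the next. Together with the trivial direction this yields $\rho_1\in[(1-\epsilon)\rho,(1+\epsilon)\rho]$. I expect the sole real obstacle to be the directed version of Lemma~\ref{lem:large_r}: confirming the correct constant $p=\frac{1}{r/(n\Delta)+1}$ and that the factor-$n$ degradation is intrinsic to the bound (not an artifact of a loose analysis) is where care is needed, since everything downstream is the same gambler's-ruin computation already used for Theorem~\ref{thm:too_big_r}.
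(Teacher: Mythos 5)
Your proposal is correct and follows essentially the same route as the paper's (sketched) argument: the lower bound $\rho_1\ge\frac{1}{\Delta r+1}$ from strong connectivity, the directed analogue of Lemma~\ref{lem:large_r} with $p=\frac{1}{r/(n\Delta)+1}$, and the gambler's-ruin bound $(n\Delta/r)^2\le\epsilon/(2\Delta r)<\epsilon/(\Delta r+1)\le\epsilon\rho_1$ under $r\ge 2\Delta^3n^2/\epsilon$ are exactly the ingredients the paper uses. Your explicit decomposition $\rho=\rho_1+(1-\rho_1)q$ just makes precise what the paper leaves implicit.
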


\end{document}